\newtheorem{theorem}{Theorem}[section]
\newtheorem{corollary}{Corollary}[theorem]
\newtheorem{lemma}[theorem]{Lemma}
\newtheorem{example}[theorem]{Example}
\newtheorem{definition}[theorem]{Definition}
\def\rc{\color{black}}
\def\bc{\color{black}}
\def\pc{\color{black}}
\def\x{{\bf x}}
\title{GESF: A Universal Discriminative Mapping Mechanism for Graph Representation Learning}
\author{
  Shupeng Gui\\
%   Department of Computer Science\\
  University of Rochester\\
  \texttt{sgui2@ur.rochester.edu}
  \and
  Xiangliang Zhang\\
  KAUST, Saudi Arabia\\
  \texttt{Xiangliang.Zhang@kaust.edu.sa}
  \and
  Shuang Qiu\\
  University of Michigan\\
  \texttt{qiush@umich.edu}\\
  \and
  Mingrui Wu\\
  Alibaba Group\\
  \texttt{mingrui.wu@alibaba-inc.com}\\
  \and
  Jieping Ye\\
  University of Michigan\\
  \texttt{jieping@gmail.com}
  \and
  Ji Liu\\
  Tencent AI lab \\
  University of Rochester\\
  \texttt{ji.liu.uwisc@gmail.com}
%   David S.~Hippocampus\thanks{Use footnote for providing further
%     information about author (webpage, alternative
%     address)---\emph{not} for acknowledging funding agencies.} \\
%   Department of Computer Science\\
%   Cranberry-Lemon University\\
%   Pittsburgh, PA 15213 \\
%   \texttt{hippo@cs.cranberry-lemon.edu} \\
  %% examples of more authors
  %% \And
  %% Coauthor \\
  %% Affiliation \\
  %% Address \\
  %% \texttt{email} \\
  %% \AND
  %% Coauthor \\
  %% Affiliation \\
  %% Address \\
  %% \texttt{email} \\
  %% \And
  %% Coauthor \\
  %% Affiliation \\
  %% Address \\
  %% \texttt{email} \\
  %% \And
  %% Coauthor \\
  %% Affiliation \\
  %% Address \\
  %% \texttt{email} \\
}
\begin{document}
% \nipsfinalcopy is no longer used

\maketitle

\begin{abstract}
% We develop a novel semi-supervised implementation of Graph Embedding to improve the expression capability of the embedding model on a graph.
{\bc
Graph embedding is a central problem in social network analysis and many other applications, aiming to learn the vector representation for each node. While most existing approaches need to specify the neighborhood and the dependence form to the neighborhood, which may significantly degrades the flexibility of representation, we propose a novel graph node embedding method (namely GESF) via the set function technique. Our method can 1) learn an arbitrary form of representation function from neighborhood, 2) automatically decide the significance of neighbors at different distances, and 3) be applied to heterogeneous graph embedding, which may contain multiple types of nodes.   Theoretical guarantee for the representation capability of our method has been proved for general homogeneous and heterogeneous graphs and evaluation results on benchmark data sets show that the proposed GESF outperforms the state-of-the-art approaches on producing node vectors for classification tasks.
% To learn effective node representations in a graph, we propose a novel approach named Graph Embedding via Set Function (GESF) which involves automatic mapping learning mechanism for information aggregation from neighborhood to the target node. GESF is capable to capture the importance and knowledge from different orders of single or multiple types of neighborhood without pre-defining specific mapping rules.  Theoretical guarantee for the representation capability of GESF has been proved for general homogeneous and heterogeneous graphs and evaluation results on benchmark data sets show that the proposed GESF outperforms the state-of-the-art approaches on producing node vectors for classification tasks.
}
\end{abstract}

\section{Introduction}\label{sec:introduction}
Graph node embedding is to learn a mapping that represents nodes as points in a low-dimensional vector space $\mathbb{R}^d$, where the geometric relationship reflects the structure of the original graph. Nodes that are ``close'' in the graph are embedded to have similar vector representations \citep{survey-CAI}. The learned node vectors benefit a number of graph analysis tasks, such as {\rc node classification \citep{bhagat2011node}, link prediction \citep{liben2007link}, community detection \citep{fortunato2010community}, and many others \citep{survey-jure}.}
In order to preserve the node geometric relations in {an} embedded space, the similarity/proximity/distance of a node to its \emph{neighborhood} is generally taken as input to different graph embedding approaches. For example, matrix-factorization approaches work on pre-defined pairwise similarity measures (e.g., different order of adjacency matrix). {\rc Deepwalk \citep{perozzi2014deepwalk}, node2vec \citep{grover2016node2vec} and other recent approaches \citep{dong2017metapath2vec}} consider flexible, stochastic measure of node similarity by the node co-occurrence on short random walks over the graph \citep{survey-Goyal}. Neighborhood autoencoder methods compress the information about a node's local neighborhood that is described as a neighborhood vector containing the node's pairwise similarity to all other nodes in the graph \citep{SDNE, DNGR}. Neural network based approaches such as  graph convolutional networks (GCN) and GraphSAGE apply convolution like functions on its surrounding neighborhood for aggregating neighborhood information \citep{kipf2016semi, GraphSAGE}.   

%<<<<<<< HEAD
%{\rc However, all existing works need to specify the neighborhood and the dependence form to the neighborhood, which significantly degrades the representation flexibility.\footnote{\rc To Dr. Zhang, please check if this claim is appropriate.} 
%=======
{\rc Although effective, all existing approaches need to specify the neighborhood and the dependence form to the neighborhood, which significantly degrades their flexibility for general graph representation learning. }
{\rc
%>>>>>>> e7c69cf3ae37db0291813e635062fff1b74267ed
In this work, we propose a novel graph node embedding method, namely Graph Embedding {\bc via} Set Function (GESF), that can 
\begin{itemize}
\item learn node representation via a \emph{universal} graph embedding function $f$, without pre-defining pairwise similarity, specifying random walk parameters, or choosing aggregation functions among element-wise mean, a max-pooling neural network, or LSTMs;
\item capture the arbitrary relationship between neighbors at different distance to the target node, and automatically decide the significance;
\item be generally applied to any graphs, from simple homogenous graphs to  heterogeneous graphs with complicated types of nodes.
\end{itemize}
}
%In this work, we propose a novel graph node embedding method that can learn a node's relationship to its local neighborhood, without pre-defining the pairwise similarity, without specifying random walk parameters, and without choosing aggregation functions among element-wise mean, a max-pooling neural network or LSTMs. 

%<<<<<<< HEAD
% The core of our approach is that the embedding vector of a node can be represented by its neighbors' embedding vector via a universal graph embedding representation function $f$, which is learned by {\bc the supervision of the learning task } \textcolor{red}{please disucss}.
% {\bc The core of our approach is that the embedding vector of a node can be represented by its neighbors' embedding vectors via a \emph{universal} graph embedding function $f$, which is learnt by the semi-supervised learning framework we propose to solve the graph embedding problem respect to specific machine learning tasks automatically. We propose a theoretical guarantee for a set function to represent the universal graph embedding function in general.}
{\pc
The core difficulty of graph node embedding is to characterize an arbitrary relationship to the neighborhood. 
From a local view of point, switching any two neighbors of a node in the same category would not affect the representation of this node. 
% Therefore, this arbitrary relationship is essentially a partial permutation invariant set function. 
Based on this key observation, we propose to learn the embedding vector of a node via a \emph{partial permutation invariant set function} applied on its neighbors' embedding vector. We provide a neat form to represent such set function and prove that it can characterize an arbitrary partial permutation set function.
}
Evaluation results on benchmark data sets show that the proposed GESF outperforms the state-of-the-art approaches on producing node vectors for classification tasks.  
 
\section{Related Work}\label{sec:relatedwork}
% \paragraph{structure}
%As the emerging of the demand of graph analysis such as social media networks, people have been working hard to answer the question how to extract useful vertex representations from the graph structures and how to design a procedure to handle the analysis effectively. \citep{yang2017fast} has an survey on graph embedding methods published up to date. We focus on some of them to make comparison with our models presented in latter sections and we emphasize the contributions of some of them as follows.
The main difference among various graph embedding methods lies in how they define the ``closeness'' between two nodes \citep{survey-CAI}.  First-order proximity, second-order proximity or even high-order proximity have been widely studied for capturing the structural relationship between nodes \citep{tang2015line,yang2017fast}. In this section, we discuss the relevant graph embedding approaches in terms of how node closeness to neighboring nodes is measured, for highlighting our contribution on utilizing neighboring nodes in a most general manner. %We especially focus on the ones to make comparison with our models presented in latter sections (
Comprehensive reviews of graph embedding can be found in \citep{survey-CAI, survey-jure, survey-Goyal,yang2017fast}.

\paragraph{Matrix Analysis on Graph Embeddding}
As early as 2011, a spectral clustering method \citep{tang2011leveraging} took the eigenvalue decomposition of a normalized Laplacian matrix of  a graph as an effective approach to obtain the embeddings of nodes. Other similar approaches work on different node similarity matrix by applying
various  similarity functions  to make a trade-off between modeling the ``first-order similarity'' and ``higher-order similarity'' \citep{GraRep, HOPE}. Node content information can also be easily fused in the pairwise similarity measure, e.g., in TADW \citep{yang2015network}, as well as node label information, which resulting in semi-supervised graph embedding methods, e.g., MMDW in \citep{tu2016max}.  
% And MMDW \citep{tu2016max} combine the decomposition approach with

\paragraph{Random Walk on a Graph to Node Representation Learning}
Both deepwalk \citep{perozzi2014deepwalk} and node2vec \citep{grover2016node2vec} are outstanding graph embedding methods to solve the node representation learning problem. They convert the graph structures into a sequential context format with random walk \citep{lovasz1993random}. Thanks to the invention of \citep{mikolov2013distributed} for word representation learning of sentences, deepwalk inherited the learning framework for words representation learning in paragraphs to generate the representation of nodes in random walk context. And then node2vec evolved such the idea with additional hyper-parameter tuning for the trade-off between DFS and WFS to control the direction of random walk.
%In general, node2vec utilizes more structural information than deepwalk used to do and achieves better performance.  
% They convert the information of graph structure to a sequential context format with random walk. 
Planetoid \citep{yang2016revisiting} proposed a semi-supervised learning framework by guiding random walk with available node label information. %using \textit{deepwalk} as an unsupervised learning component.

%\paragraph{Neighbor Relation Analyze to Graph Embedding}
%There are also methods like LINE \citep{tang2015line} which analyze the relation between nodes and their neighbors directly. And LINE introduces two orders' proximity of the local graph structures to analyze large scale networks. Similar to the image convolutional neural network (CNN), \cite{defferrard2016convolutional} introduced CNN into the representation learning for graphs and \citep{kipf2016semi} deploy the graph convolutional network in node representation learning in a graph.

%\paragraph{Semi-supervised Learning on Graph Embedding}
%In addition to the matrix decomposition on a graph, MMDW \cite{tu2016max} learned the embeddings with the information of graph structures as well as involving the information from partially labeled nodes respect to specific learning tasks. Planetoid \citep{yang2016revisiting} proposed another semi-supervised learning framework using \textit{deepwalk} as an unsupervised learning component.

\paragraph{Neighborhood Encoders to Graph Embedding}
%\textcolor{red}{Please see section 2.3 in \cite{survey-jure}, and make a summary of the mentioned related work}.
{\bc There are also methods focusing on aggregating or encoding the neighbors' information to generate node embeddings. DNGR \citep{DNGR} and SDNE \citep{wang2016structural} introduce the autoencoder to construct the similarity function between the neighborhood vectors and the embedding of the target node. DNGR defines   neighborhood vectors based on random walks and SDNE introduces adjacency matrix and Laplacian eigenmaps to the definition of neighborhood vectors. Although the idea of autoencoder is a great improvement, these methods are painful when the scale of the graph is up to millions of nodes. Therefore, methods with neighborhood aggregation and convolutional encoders are involved to construct a local aggregation for node  embedding, such as GCN \citep{kipf2016semi, kipf2016variational, schlichtkrull2017modeling, van2017graph}, column networks \citep{pham2017column} and the GraphSAGE algorithm \citep{GraphSAGE}. The main idea of these methods is involving an iterative or recursive aggregation procedure e.g., convolutional kernels or pooling procedures to generate the embedding vectors for all nodes and such aggregation procedures are shared by all nodes in a graph.}

The above-mentioned methods work differently on using neighboring nodes for node representation learning. They require on pre-defining pairwise similarity measure between nodes, or specifying random walk parameters, or choosing aggregation functions. In practice, it takes  non-trivial effort to tune these parameters or try different measures, especially when graphs are complicated with nodes in multiple types, i.e., heterogeneous graphs. This work hence targets on making neighboring nodes play their roles in a most general manner such that their contributions are learned but not user-defined. The resultant embedding method has the flexibility to work on any types of homogeneous and heterogeneous graph.
The heterogeneity of graph nodes is handled by a heterogeneous random walk procedure in \citet{dong2017metapath2vec} and by deep neural networks in \citet{chang2015heterogeneous}.  %Comparing to the existing heterogeneous graph embedding approaches, 
{\bc 
%there is no need for 
GESF has a natural advantage on avoiding any manual manipulation of random walking strategies or designs for the relationships between different types of nodes. To the invention of set functions in \citep{zaheer2017deep}, all existing valid mapping strategies from neighborhood to the target nodes can be represented by the set functions which are learnt by GESF automatically.} 
\section{A Universal Graph Embedding Model based on Set Function}
{\bc
In this section, % the ultimate goal is to propose a universal graph embedding model for graph representation learning. 
we first introduce a universal mapping function to generate the embedding vectors for nodes in a graph via involving the neighborhood with various steps to the target nodes for the graph embedding learning and then we propose a permutation invariant set function as the universal mapping function. Sequentially, matrix function is introduced to process the knowledge of different orders of neighborhood. At last, we propose the overall learning model to solve the proper embedding vectors of nodes respect to a specific learning problem.
}
% {\bc In this section, we first introduce the proposed universal graph embedding model and then formulate the optimization problem with a semi-supervised learning fashion which involving the graph embedding procedure and the machine learning tasks simultaneously.}
% This section first introduces our proposed universal graph embedding model, and then formulate the learning problem \textcolor{red}{by xxxxx} {\bc with a semi-supervsied learning fashion}.

\subsection{A universal graph embedding model} 
We target on designing graph embedding models for the most general graph that may include $K$ different types of nodes. Formally, a graph $\mathcal{G}=\{\mathcal{V}, \mathcal{E}\}$, where the node set $\mathcal{V} = \cup_{k=1}^{K}\mathcal{V}_k$, i.e., $\mathcal{V}$ is composed of $K$ disjoint types of nodes.  One instance of such a graph is the academic publication network, which includes different types of nodes for papers, publication venues, author names, author affiliations, research domains etc.  

Given a graph $\mathcal{G}$, our goal is to learn the embedding vector  for each node in this graph. As we know, the position of a node in the embedded space is collaboratively determined by its neighboring nodes. Therefore, we propose a universal embedding model where the embedding vector $x^v$ of node $v \in \mathcal{V}_k$ can be represented by its neighbors' embedding vectors via a set function $f$
\[
x^v = f(X^v_1, X^v_2, \cdots, X^v_K)\quad \forall v \in \mathcal{V}_k,~\forall k \in [K]
\]
where $X^v_k$ is a matrix with column vectors corresponding to the  embedding   of node $v$'s neighbors in type $k$. Note that the neighbors can be step-1 (or immediate) neighbors, step-2 neighbors, or even higher degree neighbors. 
However, all neighboring nodes that are $\kappa$ steps reachable from a node play the same role when localizing this node in the embedded space. Therefore,   function $f(\cdot)$ should be a partially permutation invariant function. That is, if we swap any columns in each $X^v_k$, the function value remains the same. Unfortunately, the set function is not directly learnable due to the permutation property.

One straightforward idea to represent the partially permutation invariant function is to  define it in the following form
\begin{align}
f(X^v_1, \cdots, X^v_K) := \sum_{P_1\in \mathcal{P}_{|\mathcal{V}_1|}} \sum_{P_2\in \mathcal{P}_{|\mathcal{V}_2|}} \cdots \sum_{P_K\in \mathcal{P}_{|\mathcal{V}_K|}}  g(\bar{X}^v_1P_1, \cdots, \bar{X}^v_KP_K)
\label{eq:simple}
\end{align}
where $\mathcal{P}_{n}$ denotes the set of $n-$dimensional permutation matrices, and $\bar{X}^v_{1}, \cdots, \bar{X}^v_K$ denote the representation matrix consisting of the vectors in $X^v_1, \cdots, X^v_K$, respectively. $\bar{X}^v_1P_1$ is   to permute the columns in $\bar{X}^v_1$. It is easy to verify that the function defined in \eqref{eq:simple} is partially permutation invariant, but it is almost not learnable because it involves $\prod_{k=1}^N (|\mathcal{V}_k|\,!)$ ``sum'' items. 

Our solution of learning function $f$ is based on the following important theorem, which gives a neat manner to represent any partially permutation invariant function. The proof is in the Appendix. 
%\begin{theorem}
%Let function $f$ be the following form 
%\[
%f(\underbrace{\x_1^1, \x_2^1, \cdots, \x_{N_1}^1}_{G_1}, \underbrace{\x_1^2, \x_2^2, \cdots, \x_{N_2}^2}_{G_2}, \cdots, \underbrace{\x_1^K, \x_2^K, \cdots, \x_{N_K}^K}_{G_K}).
%\]
%If the function in partially permutation invariant, that is, swapping any two values within the group $G_k$ for any $k$ does not change the function value, then there must exist functions to rewrite $f$ in the following form
%\begin{align*}
%\phi\left(\sum_{n=1}^{N_1} \psi^1(\x_n^1), \sum_{n=1}^{N_2} \psi^2(\x_n^2), \cdots, \sum_{n=1}^{N_K} \psi^K(\x_n^K)\right)
%\end{align*}
%where $h(\cdot)$ and $g^k(\cdot)$'s are certain functions. 
%\end{theorem}

\begin{theorem} %[Universal representation theorem]
\label{theorem6}
Let $f$ be a continuous real-valued function defined on a compact set $X$ with the following form 
\[
f(\underbrace{x_{1,1}, x_{1,2}, \cdots, x_{1,N_1}}_{G_1}, \underbrace{x_{2,1}, x_{2,2}, \cdots, x_{2,N_2}}_{G_2}, \cdots, \underbrace{x_{K,1}, x_{K,2}, \cdots, x_{K,N_K}}_{G_K}).
\]
If function $f$ is partially permutation invariant, that is, any permutations of the values within the group $G_k$ for any $k$ does not change the function value, then there must exist functions $h(\cdot)$ and $\{g_k(\cdot)\}_{k=1}^K$ to approximate $f$ with arbitrary precision in the following form
\begin{align}
h\left(\sum_{n=1}^{N_1} g_1(x_{1,n}), \sum_{n=1}^{N_2} g_2(x_{2,n}), \cdots, \sum_{n=1}^{N_K} g_K(x_{K,n})\right).
\label{eq:URT}
\end{align}
%where $h(\cdot)$ and $g_k(\cdot)$'s are certain functions. 
\end{theorem}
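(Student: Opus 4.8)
I would prove this by reducing to the classical symmetric-function representation (the "deep sets" result of Zaheer et al., stated essentially as the $K=1$ case) applied one group at a time, and then gluing the resulting inner functions together. Concretely, the first step is to handle a single group: for a continuous function on a compact set that is permutation invariant in its $N$ arguments $x_{1},\dots,x_{N}$, there exist continuous $g$ and $h$ with $f(x_{1},\dots,x_{N}) = h\!\left(\sum_{n=1}^{N} g(x_{n})\right)$, and moreover one may take $g$ to be a fixed universal "moment-type" map (e.g. $g(x) = (1, x, x^{2}, \dots, x^{N})$ or a power-sum map) so that $\sum_n g(x_n)$ separates multisets of size $N$; the outer $h$ is then defined on the image of this map, which is compact, and extended continuously (Tietze) to all of $\R^{\dim}$. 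I would quote this as the base case and spend no time re-deriving it.

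**Induction on $K$.** With the base case in hand, I would induct on the number of groups. For the inductive step, freeze the variables of groups $G_{2},\dots,G_{K}$ and regard $f$ as a continuous function of the $G_{1}$-block alone; by the base case there is a continuous map $\Phi_{1}(\,\cdot\,) = \sum_{n=1}^{N_1} g_{1}(x_{1,n})$ (with $g_1$ the fixed universal embedding, independent of the frozen variables) and a continuous function $\tilde f$ such that $f = \tilde f\big(\Phi_1(x_{1,\cdot}),\, x_{2,\cdot},\dots,x_{K,\cdot}\big)$. The key point is that $g_1$ can be chosen once and for all, not depending on the other groups, precisely because it is a universal multiset-separating embedding; hence $\tilde f$ is a genuine function of the new argument and is still continuous on a compact set, and it inherits partial permutation invariance in groups $G_{2},\dots,G_{K}$. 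Applying the inductive hypothesis to $\tilde f$ (now with $K-1$ symmetric groups) yields $h$ and $g_{2},\dots,g_{K}$ of the desired form, completing the induction. An equivalent non-inductive phrasing: apply the base case simultaneously to each coordinate block to get the product embedding $\big(\sum_n g_1(x_{1,n}),\dots,\sum_n g_K(x_{K,n})\big)$, argue it separates tuples of multisets, and define $h$ on its (compact) image, extending by Tietze.

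**The main obstacle.** The subtle point — and where I would concentrate the argument — is the "arbitrary precision" clause versus an exact equality. If one wants an exact representation, the universal embedding $g_k$ must be chosen adapted to the known multiset size $N_k$ (so that $\sum_n g_k(x_{k,n})$ is injective on multisets of that size), and then $h$ exists on the image exactly; the image being compact (continuous image of a compact set) is what lets Tietze extend $h$ to a continuous function on all of Euclidean space. If instead one only wants uniform approximation with a fixed finite-dimensional $g_k$ of dimension smaller than $N_k$, one argues that the moment map can be approximated, or invokes Stone–Weierstrass on the algebra of symmetric polynomials: symmetric polynomials are generated by power sums $\sum_n x_{k,n}^{j}$, these separate points of the quotient (multiset) space, and Stone–Weierstrass then gives uniform density, after which the outer map is assembled and smoothed. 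I would also need to be careful that "compact set $X$" is compatible with taking products and with the continuity of the extension — i.e. that restricting/freezing coordinates keeps us on compact sets — which is routine but should be stated. The honest heart of the proof is therefore the separation property of power-sum (or moment) embeddings on multisets plus Tietze extension; everything else is bookkeeping over the index $K$.
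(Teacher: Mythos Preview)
Your proposal is correct and even yields a stronger conclusion (an exact representation rather than a uniform approximation), but your primary route is genuinely different from the paper's. The paper proceeds algebraically: it invokes Stone--Weierstrass to approximate $f$ by a polynomial, symmetrizes that polynomial over $S_{N_1}\times\cdots\times S_{N_K}$ to obtain a partially-permutation-invariant polynomial approximant, proves a structural lemma that any such polynomial decomposes as $\sum_i c_i\, m_1^{\boldsymbol{\lambda}^i_1}\cdots m_K^{\boldsymbol{\lambda}^i_K}$ with one monomial-symmetric factor per group, and finally rewrites each factor in the power-sum basis via Newton's identities, so the approximant becomes a polynomial $h$ in the vectors $\sum_n g_k(x_{k,n})$ with $g_k(x)=(x,x^2,\dots,x^{N_k})$. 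Your approach is instead topological: you take the same power-sum maps $g_k$, observe that the product embedding separates tuples of multisets, pass to the compact quotient (where it becomes a homeomorphism onto its image), and Tietze-extend the induced $h$. What your route buys is exactness and conceptual economy (no symmetric-function algebra beyond the separation property of power sums); what the paper's route buys is an explicit polynomial form for $h$ and a self-contained argument that does not quote the $K=1$ deep-sets theorem as a black box. Your brief Stone--Weierstrass fallback is essentially the paper's strategy, minus the explicit product-of-monomial-symmetric-polynomials decomposition that the paper isolates as a separate lemma. One small caution on your inductive variant: since $X$ need not be a product set, the ``freeze the other groups'' step is cleanest handled exactly as in your non-inductive phrasing (define $\tilde f$ on the image of $(\Phi_1,\mathrm{id})$ and extend), which you already flag.
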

%This result essentially suggests a neat but universal way to represent any partially permutation invariant function. 
Based on this theorem, we only need to parameterize $h(\cdot)$ and $\{g_k\}_{k=1}^K$ to learn the node embedding function. We next formulate the embedding model when considering different order of neighborhood. 
 
\paragraph{1-step neighbors}    From Theorem~\ref{theorem6},  \emph{any} mapping function of a node $v \in \mathcal{V}_k$ can be characterized by appropriately defined functions $\psi_1, \psi_2, \cdots, \psi_K$, and $\phi_k$:   
\[
x^v = \phi_{k} \left( \sum_{u\in \Omega^{v}_{1, 1}}  \psi_{1}\left( x^u\right), \sum_{u \in \Omega^{v}_{1, 2}}  \psi_{2}\left( x^u\right), \cdots, \sum_{u \in \Omega^{v}_{1, K}}  \psi_{K}\left( x^u\right) \right)\quad \forall v\in \mathcal{V}_k,~\forall k\in [K],
\]
where $\Omega^{v}_{n,k} := \Omega^v_{n} \cap \mathcal{V}_k$ denotes the step-$n$ neighbors of node $v$ in node type $k$.
%where $\text{T}(v) \in \{1, 2, \cdots, K\}$ returns the type of the node $v$.

\paragraph{Multi-step neighbors} 
High order proximity has been shown beneficial on generating high quality embedding vectors \citep{yang2017fast}. Extending the 1-step neighbor model, we can have the more general model where the representation of each node could depend on immediate (1-step) neighbors, 2-step neighbors, 3-step neighbors, and even infinite-step neighbors.
%Second we extend it to the more general case where the representation of each node could depend on immediate (1-step) neighbors, 2-step neighbors, 3-step neighbors, and even infinite-step neighbors. In the spirit of \eqref{eq:rep_x_v}, we propose the representation of any node $v\in \mathcal{V}_k$ to be characterized by integrating 
\begin{align}
x^v = \phi_{k} \left( \sum_{n=0}^{\infty} \alpha_n \sum_{u \in \Omega^{v}_{n, 1} }  \psi_{1}\left( x^u\right), ~\sum_{n=0}^{\infty} \alpha_n\sum_{u \in \Omega^{v}_{n,2}}  \psi_{2}\left( x^u\right), \cdots, \sum_{n=0}^{\infty} \alpha_n\sum_{u \in \Omega^{v}_{n, K}}  \psi_{K}\left( x^u\right) \right) 
& \label{eq:heter_m}
\\
\forall v\in \mathcal{V}_k~, \forall k \in [K]. & \nonumber
\end{align}
where $\alpha_1, \alpha_2, \cdots, \alpha_\infty$ are the weights for neighbors at different steps. {\rc Let $A\in \{0,1\}^{|\mathcal{V}|\times |\mathcal{V}|}$ be the adjacent matrix indicating all edges by $1$. If we define the polynomial matrix function $\mathfrak{P}(\cdot)$ on the adjacent matrix $A$ as $\mathfrak{P}(A) = \sum_{n=0}^{\infty} \alpha_n A^n$,
% \begin{align*}
%   \mathfrak{P}(A) = \sum_{n=0}^{\infty} \alpha_n A^n,
% \end{align*}
we can cast \eqref{eq:heter_m} into its matrix form
\begin{align}
x^v = \phi_{k} \left( \bar{\psi}_{1}(X_1)[\mathfrak{P}(A)]_{\mathcal{V}_1, v},~\bar{\psi}_{2}(X_2)[\mathfrak{P}(A)]_{\mathcal{V}_2, v}, ~\cdots,~ \bar{\psi}_{K}(X_K)[\mathfrak{P}(A)]_{\mathcal{V}_K, v} \right) &\label{eq:heter_mm} \\
\quad \forall v\in \mathcal{V}_k~, \forall k \in [K]. &\nonumber
\end{align}
where $X_k$ denotes the representation matrix for nodes in type $k$, 
% $\rho(A) := \sum_{n=0}^{\infty} \alpha_n A^n$, 
$[\mathfrak{P}(A)]_{\mathcal{V}_k, v}$ denotes the submatrix of $\mathfrak{P}(A)$ indexed by column $v$ and rows in $\mathcal{V}_k$, and function $\bar{\phi}$ (with $\bar{\cdot}$ on the top of function $\phi()$) is defined as the function extension
\[
\bar{\phi}(X) := [\phi(x_1), \phi(x_2), \cdots, \phi(x_{N})].
\]
Note that the embedding vectors for different type of nodes may be with different dimensions.
Homogeneous graph is a special case of the heterogeneous graph with $K=1$. The above proposed model is thus naturally usable  on homogeneous graph. 

To avoid optimizing infinite number of coefficients, we propose to use a 1-dimensional NN function $\rho(\cdot):~\mathbb{R} \rightarrow \mathbb{R}$ to equivalently represent the function $\mathfrak{P}(\cdot)$ to reduce the number of parameters based on the following observations

%\textcolor{red}{we propose to use the polynomial function $\rho$ xxxxx. Please check to make sure the description is correct and consistent} {\bc how about use another symbol for the matrix function?}
\[
%\sum_{n=0}^{\infty} \alpha_n A^n = 
\mathfrak{P}(A)  =
U
\text{diag}\left(\sum_{n=0}^\infty \alpha_n \sigma_1^n, \cdots, \sum_{n=0}^\infty \alpha_n \sigma_N^n\right)
U^\top
 = 
 U
\text{diag}\left(\rho(\sigma_1), \cdots, \rho(\sigma_N)\right)
U^\top,
\]
where $A=U\text{diag}(\sigma_1, \cdots, \sigma_N)U^\top$ is the singular value decomposition. 
% Instead of parameterizing function $\rho(\cdot)$ by polynomial coefficients $\{\alpha_{n}\}_{n=0}^{\infty}$, 
We parameterize $\rho: \mathbb{R} \mapsto \mathbb{R}$ using 1-dimensional NN, which allows us easily controlling the number of variables to optimize in $\rho$ by choosing the number of layers and the number of nodes in each layer.

}

\subsection{The overall model}

%Since the heterogeneous graph generalizes the homogeneous graph, \eqref{eq:heter_mm} is the most generic form. 
{\bc
For short, we denote the representation function for $x^v$ in \eqref{eq:heter_mm} by
\[
x^v = \mathcal{R}_{\rho, \phi^k, \{\psi^k\}_{k=1}^K}(v) \quad \forall v\in \mathcal{V}_k~, \forall k \in [K]. 
\]
To fulfill the requirement of a specific learning task, we propose the following learning model involving a supervised component
\begin{align}\label{eq:over-all}
  \min_{
    \substack{
      \{x^v\}_{v\in \mathcal{V}}, \rho, \\
      \{\phi^k\}_{k=1}^K, \{\psi^k\}_{k=1}^K}
      } \quad 
    \sum_{k=1}^K \frac{1}{\lambda_k|\mathcal{V}_k|}\sum_{u\in \mathcal{V}_k} \left\|x^u -  \mathcal{R}_{\rho, \phi^k, \{\psi^k\}_{k=1}^K}(u) \right\|^2 
    + \frac{1}{|\mathcal{V}_{\text{label}}|}\sum_{v\in \mathcal{V}_{\text{label}}}\ell(x^v, y^v)
  \end{align}
where $\mathcal{V}_{\text{label}} \subset \mathcal{V}$ denotes the set of labeled nodes, and $\lambda_k>0$ balances the representation error and prediction error. The first unsupervised learning component restricts the representation error between the target node and its neighbors with $L_2$ norm since it is allowed to have noise in a practical graph. And the supervised component is flexible to be replaced with any designed learning task on the nodes in a graph. For example, to a regression problem, a least square loss can be chosen to replace $\ell(x, y)$ and a cross entropy loss can be used to formulate a classification learning problem. To solve the problem in Eq. \eqref{eq:over-all}, we apply a stochastic gradient descent algorithm (SGD) to compute the effective solutions for the learning variables simultaneously.
%The optimization problem can be solved with a SGD algorithm. 
}
% Then we can propose the following model involving the supervised component
% \begin{align}\label{eq:over-all}
% \min_{\{x^v\}_{v\in \mathcal{V}}, \rho, \{\phi^k\}_{k=1}^K, \{\psi^k\}_{k=1}^K} \quad \sum_{k=1}^K \sum_{u\in \mathcal{V}_k} \left\|x^v -  \mathcal{R}_{\rho, \phi^k, \{\psi^k\}_{k=1}^K}(v) \right\|^2 + \lambda \sum_{v\in \mathcal{V}_{\text{label}}}\ell(x^v, y^v)
% \end{align}
% where $\mathcal{V}_{\text{label}} \subset \mathcal{V}$ denotes the set of labeled nodes, and $\lambda>0$ balances the representation error and prediction error.

\section{Experiments}

This section reports experimental results to validate the proposed method, comparing to   state-of-the-art algorithms on  benchmark datasets including both homogenous and heterogenous graphs.

%\footnote{\rc To All: shall we use ``graph'' or ``network''? Shupeng could you check with Dr. Zhang?}

%To illustrate the performance of our graph embedding learning framework, we propose several experiments on both heterogeneous and homogeneous graphs with the comparison of some state-of-art graph embedding algorithms.

% \subsection{Multi-class Classification on Instances in Homogeneous Networks}

 \subsection{Comparison on homogeneous graphs}
We consider the multi-class classification problem over the homogeneous graphs. Given a graph with partially labeled nodes, the goal is to learn the representation for each node for predicting the class for unlabelled nodes.

%Multi-class classification is one of the most popular learning task on graph analysis. For a given graph, the vertexes may belong to different communities once it is known that there exist different classes and the hypotaxis between some instances and classes are known as well. For example, in a citation network, different authors might belong to various research areas, such as Machine Vision, Reinforcement Learning and etc. We are required to infer the classes of some authors when the labels of the complementary set of authors are known. Therefore, a useful representations learning from graph structures would help the users to detect potential groups in a large graph. On the other hand, the semi-supervised learning framework is ideally suitable to solve such learning tasks with partially labeled instances.

\paragraph{Datasets}
We evaluate the performance of GESF and methods for comparison on five datasets.   % For these five datasets, methods are supposed to solve the multi-class classification problem on each of them and we list the details of each datasets as below.
{\small
\begin{itemize}
\item \textbf{Cora} \citep{mccallum2000automating} is a paper citation network. Each node is a paper. There are 2708 papers and 5429 citation links in total. Each paper is associated with one of 7 classes.
%is a paper citation network. Each node in the network is a paper and there are 2708 papers and 5429 citation links in total. All 2708 papers have at least one citation link from other papers or one citation link to other papers. Each paper is associated with one out of the 7 classes, ``Case\_Based'', ``Genetic\_Algorithms'', ``Neural\_Networks'', ``Probabilistic\_Methods'', ``Reinforcement\_Learning'', ``Rule\_Learning'', ``Theory''.
\item \textbf{CiteSeer} \citep{giles1998citeseer} is another paper citation network. CiteSeer contains 3,312 papers and 4,732 citations in total. All these papers have been classified into 6 classes.
\item \textbf{Pubmed} \citep{sen2008collective} is a larger and more complex citation networks compared to previous two datasets. There are 19,717 vertexes and 88,651 citation links. Papers are classified into 3 classes.
\item \textbf{Wikipedia} \citep{sen2008collective} contains 2,405 online webpages. The 17,981 links are undirected between pairs of them. All these pages are from 17 categories.
\item \textbf{Email-eu} \citep{leskovec2007graph} is an Email communication network which illustrates the email communication relationships between researchers in a large European research institution. There are 1,005 researchers and 25,571 links between them. Department affiliations (42 in total) of the researchers are considered as labels to predict. %There are 1,005 researchers considered in the network and 25,571 links between pairs of nodes in the network. These researchers are exactly from one of the department in the institute where there are 42 departments in total.
\end{itemize}
}

\paragraph{Baseline methods}
%To make the comparison with those state-of-art methods fair enough, we tune others methods with the recommendation parameters and follow the instructions in the papers.  The embedding size setting is 64 for all methods mentioned in the experiments.
The compared baseline algorithms are listed below:
{\small
\begin{itemize}
\item \textbf{Deepwalk} \citep{perozzi2014deepwalk} is an unsupervised graph embedding method which relies on the random walk and word2vec method.  For each vertex, we  take 80 random walks with length 40, and set windwo size 10. Since deepwalk is {\bf unsupervised}, we apply a logistic regression on the generated embeddings for node classification. %Since deepwalk is unsupervised, we apply a logistic regression on the embeddings from it with the labels in a training set. And since this is a multi-class classification task, {we pick the predicted class with highest probability among all predictions for each instance to be the prediction at last.}
%we take the highest logits \footnote{\rc To Shupeng, what is ``logits''?} among all the prediction results for each instance in the test set to be the predicted class.
\item \textbf{Node2vec} \citep{grover2016node2vec} is an improved graph embedding method based on deepwalk. We set the window size as 10, the walk length as 80 and the number of walks for each node is set to 100. Similarly, the node2vec is {\bf unsupervise}d as well. We apply the same evaluation procedure on the embeddings of node2vec as what we did for deepwalk.
\item \textbf{MMDW} \citep{tu2016max} is a {\bf semi-supervised} learning framework of graph embedding which combines matrix decomposition and SVM classification. We tune the method multiple times and take 0.01 as the hyper-parameter $\eta$ in the method which is recommended by authors.
\item \textbf{Planetoid} \citep{yang2016revisiting} is a {\bf semi-supervised} learning framework.
%and we follow the default settings in the authors' implementation.
We mute the node attributes involved in \textit{planetoid} since we only focus on the information abstraction from the graph structures.
\item \textbf{GCN (Graph Convolutional Networks)} \citep{kipf2016semi} applies the convolutional neural networks into the {\bf semi-supervised} embedding learning of graph. 
% We follow the default experiment settings  from the authors' implementation.
We eliminate the node attributes for fairness as well.
\end{itemize}
}

\paragraph{Experiment setup and results}
For fair comparison, the dimension of representation vectors is chosen to be the same for all algorithms (the dimension is $64$).
% \footnote{\rc To Shupeng: we need to introduce how to choose the supervised function in Eq. (7)!}
The hyper-parameters are fine-tuned for all of them. The details of GESF for multi-class case are as follows.
{\small
\begin{itemize}
  \item \textbf{Supervised Component:} \textit{Softmax} function is chosen to formulate our supervised component in Eq. \eqref{eq:over-all} which is defined as $\sigma:\mathbb{R}^K \mapsto \{z\in\mathbb{R}^K | \sum_{i=1}^K z_i = 1, z_i > 0\}$. For an arbitrary embedding $x\in\mathbb{R}^d$, we have the probability term as $\text{P}(y=j|x) = \frac{\exp(w_j^\top x + b_j)}{\sum_i^K \exp(w_i^\top x + b_i)}$ for such node to be predicted as class $j$, where $(w_j\in\mathbb{R}^{d}, b_j\in\mathbb{R})$ is a classifier for class $j$. Therefore, the whole supervised component in Eq. \eqref{eq:over-all} is $-\sum_{v\in\mathcal{V}_{\textrm{label}}}\sum_j^K y_j \log \text{P}(y=j | x_v) + \lambda_w\mathcal{R}(w)$, where $\mathcal{R}(w)$ is an $L_2$ regularization for $w$ and $\lambda_w$ is chosen to be $10^{-3}$.
  \item \textbf{Unsupervised embedding mapping Component:} We design a two-layer NN with hidden dimension 64 to form the mapping from embeddings of neighbors to the target node and we also form a two-layer 1-to-1 NN with a 3 dimensional hidden layer to construct the matrix function for the adjacency matrix. We pre-process the matrix $A$ with an eigenvalue decomposition where the procedure preserve the highest 1000 eigenvalues in default. The balance hyper-parameter $\lambda_1$ is set to be $10^{-3}$.
\end{itemize}
}

%For the datasets Cora, CiteSeer, Pubmed, Wikipedia and Email-eu, we take the experiments on each of them and compare the performance among all methods mentioned above. 
We take   experiments on each data set and compare the performance among all methods mentioned above. 
Since it is the multi-class classification scenario, we use \textit{Accuracy} as the evaluation criterion. The percentage of labeled samples is chosen from $10\%$ to $90\%$ and the remaining samples are used for evaluation, except for \textit{planetoid} \citep{yang2016revisiting}. Fixed   training   and testing dataset are used due to the optimization strategy of \textit{planetoid}, which is dependent upon the matching order of the vertexes in the graph and the order in the training and test set.
%  {\rc the optimization strategy related to the order of vertexes.}
%Our experiment goes through the training percentage from 10\% to 90\%, and take the evaluation on test set consisting of the rest instances in the network. 
%Exceptionally, the method of \textit{planetoid} fixed the training and test set in the authors' implementation. 
Therefore, we only provide the results of \textit{planetoid} of one time. All experiments are repeated for {three} times and we report the mean and standard deviation of their performance in the Tables \ref{tab:acc-cora}
%, \ref{tab:acc-citeseer}, \ref{tab:acc-pubmed}, \ref{tab:acc-Wikipedia}, and \ref{tab:acc-email}
.  We highlight the best performance for each dataset with \textbf{bold} font style and the second best results with a ``*''. We can observe that in most cases, our method outperforms other methods. 

\begin{table}[!htp]
  \centering
  {\scriptsize
  \caption{Accuracy (\%) of Multi-class Classification Experiments}
  \label{tab:acc-cora}
  \begin{tabular}{@{}ccccccccccc@{}}
  \toprule
  \multicolumn{1}{l}{}&training\%  & 10.00\%                                                      & 20.00\%                                                      & 30.00\%                                                               & 40.00\%                                                               & 50.00\%                                                               & 60.00\%                                                               & 70.00\%                                                               & 80.00\%                                                               & 90.00\%                                                               \\ \midrule
  \multicolumn{1}{c|}{\multirow{6}{*}{\rotatebox{90}{Cora}}}&deepwalk        & {\begin{tabular}[c]{@{}c@{}}75.47*\\ $\pm$1.01\end{tabular}} & {\begin{tabular}[c]{@{}c@{}}78.29*\\ $\pm$1.30\end{tabular}} & \begin{tabular}[c]{@{}c@{}}79.43\\ $\pm$1.29\end{tabular}          & \begin{tabular}[c]{@{}c@{}}79.96\\ $\pm$0.76\end{tabular}          & \begin{tabular}[c]{@{}c@{}}80.80\\ $\pm$0.40\end{tabular}          & \begin{tabular}[c]{@{}c@{}}81.33\\ $\pm$0.82\end{tabular}          & \begin{tabular}[c]{@{}c@{}}81.50\\ $\pm$0.69\end{tabular}          & \begin{tabular}[c]{@{}c@{}}80.89\\ $\pm$0.82\end{tabular}          & \begin{tabular}[c]{@{}c@{}}83.04\\ $\pm$2.47\end{tabular}          \\
  \multicolumn{1}{c|}{}&node2vec        & \textbf{\begin{tabular}[c]{@{}c@{}}75.52\\ $\pm$1.22\end{tabular}} & \begin{tabular}[c]{@{}c@{}}77.81\\ $\pm$1.51\end{tabular} & \begin{tabular}[c]{@{}c@{}}78.91\\ $\pm$0.94\end{tabular}          & \begin{tabular}[c]{@{}c@{}}79.72\\ $\pm$0.44\end{tabular}          & \begin{tabular}[c]{@{}c@{}}80.86\\ $\pm$0.67\end{tabular}          & \begin{tabular}[c]{@{}c@{}}80.70\\ $\pm$1.32\end{tabular}          & \begin{tabular}[c]{@{}c@{}}80.37\\ $\pm$1.22\end{tabular}          & \begin{tabular}[c]{@{}c@{}}80.59\\ $\pm$1.03\end{tabular}          & \begin{tabular}[c]{@{}c@{}}81.78\\ $\pm$1.41\end{tabular}          \\
  \multicolumn{1}{c|}{}&MMDW            & \begin{tabular}[c]{@{}c@{}}74.88\\ $\pm$0.23\end{tabular} & \textbf{\begin{tabular}[c]{@{}c@{}}79.18\\ $\pm$0.10\end{tabular}} & {\begin{tabular}[c]{@{}c@{}}81.20*\\ $\pm$0.11\end{tabular}}          & {\begin{tabular}[c]{@{}c@{}}82.19*\\ $\pm$0.25\end{tabular}}          & {\begin{tabular}[c]{@{}c@{}}83.10*\\ $\pm$0.41\end{tabular}}          & {\begin{tabular}[c]{@{}c@{}}84.62*\\ $\pm$0.09\end{tabular}}          & {\begin{tabular}[c]{@{}c@{}}85.54*\\ $\pm$0.44\end{tabular}}          & {\begin{tabular}[c]{@{}c@{}}85.27*\\ $\pm$0.22\end{tabular}}          & {\begin{tabular}[c]{@{}c@{}}87.82*\\ $\pm$0.37\end{tabular}}          \\
  \multicolumn{1}{c|}{}&planetoid & \begin{tabular}[c]{@{}c@{}}51.31$\pm$0\end{tabular}      & \begin{tabular}[c]{@{}c@{}}50.62$\pm$0\end{tabular}      & \begin{tabular}[c]{@{}c@{}}47.63$\pm$0\end{tabular}               & \begin{tabular}[c]{@{}c@{}}44.49$\pm$0\end{tabular}               & \begin{tabular}[c]{@{}c@{}}36.12$\pm$0\end{tabular}               & \begin{tabular}[c]{@{}c@{}}27.77$\pm$0\end{tabular}               & \begin{tabular}[c]{@{}c@{}}29.15$\pm$0\end{tabular}               & \begin{tabular}[c]{@{}c@{}}29.15$\pm$0\end{tabular}               & \begin{tabular}[c]{@{}c@{}}28.41$\pm$0\end{tabular}               \\
  \multicolumn{1}{c|}{}&GCN       & \begin{tabular}[c]{@{}c@{}}29.98\\ $\pm$0.26\end{tabular} & \begin{tabular}[c]{@{}c@{}}30.67\\ $\pm$0.53\end{tabular} & \begin{tabular}[c]{@{}c@{}}30.75\\ $\pm$0.51\end{tabular}          & \begin{tabular}[c]{@{}c@{}}29.66\\ $\pm$0.84\end{tabular}          & \begin{tabular}[c]{@{}c@{}}30.35\\ $\pm$0.59\end{tabular}          & \begin{tabular}[c]{@{}c@{}}30.84\\ $\pm$0.42\end{tabular}          & \begin{tabular}[c]{@{}c@{}}31.08\\ $\pm$1.36\end{tabular}          & \begin{tabular}[c]{@{}c@{}}31.49\\ $\pm$1.05\end{tabular}          & \begin{tabular}[c]{@{}c@{}}27.90\\ $\pm$1.86\end{tabular}          \\
  \multicolumn{1}{c|}{}&\textbf{GESF}   & \begin{tabular}[c]{@{}c@{}}69.58\\ $\pm$2.12\end{tabular} & \begin{tabular}[c]{@{}c@{}}78.08\\ $\pm$1.16\end{tabular} & \textbf{\begin{tabular}[c]{@{}c@{}}81.37\\ $\pm$2.01\end{tabular}} & \textbf{\begin{tabular}[c]{@{}c@{}}84.16\\ $\pm$0.41\end{tabular}} & \textbf{\begin{tabular}[c]{@{}c@{}}85.60\\ $\pm$1.49\end{tabular}} & \textbf{\begin{tabular}[c]{@{}c@{}}85.66\\ $\pm$1.16\end{tabular}} & \textbf{\begin{tabular}[c]{@{}c@{}}86.17\\ $\pm$0.38\end{tabular}} & \textbf{\begin{tabular}[c]{@{}c@{}}87.68\\ $\pm$0.46\end{tabular}} & \textbf{\begin{tabular}[c]{@{}c@{}}88.27\\ $\pm$0.77\end{tabular}} \\ \hline
  \multicolumn{1}{c|}{\multirow{6}{*}{\rotatebox{90}{CiteSeer}}}&deepwalk        & \begin{tabular}[c]{@{}c@{}}51.66\\ $\pm$0.73\end{tabular} & \begin{tabular}[c]{@{}c@{}}54.68\\ $\pm$1.28\end{tabular} & \begin{tabular}[c]{@{}c@{}}56.15\\ $\pm$0.64\end{tabular} & \begin{tabular}[c]{@{}c@{}}56.59\\ $\pm$0.37\end{tabular} & \begin{tabular}[c]{@{}c@{}}56.92\\ $\pm$1.04\end{tabular} & \begin{tabular}[c]{@{}c@{}}57.78\\ $\pm$0.88\end{tabular} & \begin{tabular}[c]{@{}c@{}}56.92\\ $\pm$0.70\end{tabular}          & \begin{tabular}[c]{@{}c@{}}56.71\\ $\pm$1.39\end{tabular}          & \begin{tabular}[c]{@{}c@{}}59.40\\ $\pm$0.49\end{tabular}          
    \\
  \multicolumn{1}{c|}{}&node2vec        & \begin{tabular}[c]{@{}c@{}}51.97*\\ $\pm$1.09\end{tabular} & \begin{tabular}[c]{@{}c@{}}55.17*\\ $\pm$1.22\end{tabular} & \begin{tabular}[c]{@{}c@{}}56.28\\ $\pm$0.41\end{tabular} & \begin{tabular}[c]{@{}c@{}}55.82\\ $\pm$1.03\end{tabular} & \begin{tabular}[c]{@{}c@{}}57.44\\ $\pm$0.54\end{tabular} & \begin{tabular}[c]{@{}c@{}}57.49\\ $\pm$1.07\end{tabular} & \begin{tabular}[c]{@{}c@{}}57.42\\ $\pm$1.38\end{tabular}          & \begin{tabular}[c]{@{}c@{}}56.89\\ $\pm$1.87\end{tabular}          & \begin{tabular}[c]{@{}c@{}}57.95\\ $\pm$1.07\end{tabular}          
    \\
  \multicolumn{1}{c|}{}&MMDW            & \textbf{\begin{tabular}[c]{@{}c@{}}55.36\\ $\pm$0.60\end{tabular}} & \textbf{\begin{tabular}[c]{@{}c@{}}60.98\\ $\pm$0.56\end{tabular}} & \textbf{\begin{tabular}[c]{@{}c@{}}62.00\\ $\pm$0.17\end{tabular}} & \textbf{\begin{tabular}[c]{@{}c@{}}63.89\\ $\pm$0.12\end{tabular}} & \textbf{\begin{tabular}[c]{@{}c@{}}66.59\\ $\pm$0.27\end{tabular}} & \begin{tabular}[c]{@{}c@{}}69.00*\\ $\pm$0.15\end{tabular} & \begin{tabular}[c]{@{}c@{}}69.72*\\ $\pm$0.82\end{tabular}          & \begin{tabular}[c]{@{}c@{}}70.40*\\ $\pm$0.93\end{tabular}          & \begin{tabular}[c]{@{}c@{}}70.64*\\ $\pm$0.31\end{tabular}          
    \\
  \multicolumn{1}{c|}{}&planetoid & \begin{tabular}[c]{@{}c@{}}41.53$\pm$0\end{tabular}      & \begin{tabular}[c]{@{}c@{}}41.62$\pm$0\end{tabular}      & \begin{tabular}[c]{@{}c@{}}40.19$\pm$0\end{tabular}      & \begin{tabular}[c]{@{}c@{}}37.88$\pm$0\end{tabular}      & \begin{tabular}[c]{@{}c@{}}32.91$\pm$0\end{tabular}      & \begin{tabular}[c]{@{}c@{}}29.06$\pm$0\end{tabular}      & \begin{tabular}[c]{@{}c@{}}20.82$\pm$0\end{tabular}               & \begin{tabular}[c]{@{}c@{}}21.11$\pm$0\end{tabular}               & \begin{tabular}[c]{@{}c@{}}21.39$\pm$0\end{tabular}               
    \\
  \multicolumn{1}{c|}{}&GCN       & \begin{tabular}[c]{@{}c@{}}21.00\\ $\pm$0.05\end{tabular} & \begin{tabular}[c]{@{}c@{}}19.07\\ $\pm$1.40\end{tabular} & \begin{tabular}[c]{@{}c@{}}20.93\\ $\pm$0.28\end{tabular} & \begin{tabular}[c]{@{}c@{}}20.91\\ $\pm$0.44\end{tabular} & \begin{tabular}[c]{@{}c@{}}19.52\\ $\pm$0.72\end{tabular} & \begin{tabular}[c]{@{}c@{}}20.20\\ $\pm$0.68\end{tabular} & \begin{tabular}[c]{@{}c@{}}20.54\\ $\pm$2.91\end{tabular}          & \begin{tabular}[c]{@{}c@{}}20.90\\ $\pm$1.82\end{tabular}          & \begin{tabular}[c]{@{}c@{}}18.78\\ $\pm$1.49\end{tabular}          
    \\
  \multicolumn{1}{c|}{}&\textbf{GESF}            & \begin{tabular}[c]{@{}c@{}}46.50\\ $\pm$2.61\end{tabular} & \begin{tabular}[c]{@{}c@{}}53.26\\ $\pm$1.62\end{tabular} & \begin{tabular}[c]{@{}c@{}}59.68*\\ $\pm$0.43\end{tabular} & \begin{tabular}[c]{@{}c@{}}62.47*\\ $\pm$1.32\end{tabular} & \begin{tabular}[c]{@{}c@{}}66.41*\\ $\pm$1.20\end{tabular} & \textbf{\begin{tabular}[c]{@{}c@{}}69.76\\ $\pm$0.49\end{tabular}} & \textbf{\begin{tabular}[c]{@{}c@{}}71.77\\ $\pm$2.13\end{tabular}} & \textbf{\begin{tabular}[c]{@{}c@{}}72.16\\ $\pm$1.55\end{tabular}} & \textbf{\begin{tabular}[c]{@{}c@{}}77.64\\ $\pm$1.69\end{tabular}} \\ \hline
  \multicolumn{1}{c|}{\multirow{6}{*}{\rotatebox{90}{Pubmed}}}&deepwalk        & \begin{tabular}[c]{@{}c@{}}76.98*\\ $\pm$0.33\end{tabular} & \begin{tabular}[c]{@{}c@{}}77.48\\ $\pm$0.20\end{tabular} & \begin{tabular}[c]{@{}c@{}}77.68\\ $\pm$0.09\end{tabular}          & \begin{tabular}[c]{@{}c@{}}77.72\\ $\pm$0.17\end{tabular}          & \begin{tabular}[c]{@{}c@{}}77.99\\ $\pm$0.48\end{tabular}          & \begin{tabular}[c]{@{}c@{}}78.00\\ $\pm$0.19\end{tabular}          & \begin{tabular}[c]{@{}c@{}}78.12\\ $\pm$0.50\end{tabular}          & \begin{tabular}[c]{@{}c@{}}78.60\\ $\pm$0.63\end{tabular}          & \begin{tabular}[c]{@{}c@{}}78.21\\ $\pm$0.89\end{tabular}          \\
  \multicolumn{1}{c|}{}&node2vec        & \textbf{\begin{tabular}[c]{@{}c@{}}77.47\\ $\pm$0.20\end{tabular}} & \textbf{\begin{tabular}[c]{@{}c@{}}77.89\\ $\pm$0.16\end{tabular}} & \begin{tabular}[c]{@{}c@{}}78.09*\\ $\pm$0.20\end{tabular}          & \begin{tabular}[c]{@{}c@{}}78.25*\\ $\pm$0.25\end{tabular}          & \begin{tabular}[c]{@{}c@{}}78.53*\\ $\pm$0.44\end{tabular}          & \begin{tabular}[c]{@{}c@{}}78.34*\\ $\pm$0.35\end{tabular}          & \begin{tabular}[c]{@{}c@{}}78.29*\\ $\pm$0.46\end{tabular}          & \begin{tabular}[c]{@{}c@{}}78.81*\\ $\pm$0.61\end{tabular}          & \begin{tabular}[c]{@{}c@{}}78.56*\\ $\pm$0.68\end{tabular}          \\
  \multicolumn{1}{c|}{}&MMDW$\dag$            & -                                                            & -                                                            & -                                                                     & -                                                                     & -                                                                     & -                                                                     & -                                                                     & -                                                                     & -                                                                     \\
  \multicolumn{1}{c|}{}&planetoid & \begin{tabular}[c]{@{}c@{}}39.31$\pm$0\end{tabular}      & \begin{tabular}[c]{@{}c@{}}43.40$\pm$0\end{tabular}      & \begin{tabular}[c]{@{}c@{}}40.50$\pm$0\end{tabular}               & \begin{tabular}[c]{@{}c@{}}40.41$\pm$0\end{tabular}               & \begin{tabular}[c]{@{}c@{}}40.32$\pm$0\end{tabular}               & \begin{tabular}[c]{@{}c@{}}40.44$\pm$0\end{tabular}               & \begin{tabular}[c]{@{}c@{}}40.82$\pm$0\end{tabular}               & \begin{tabular}[c]{@{}c@{}}40.51$\pm$0\end{tabular}               & \begin{tabular}[c]{@{}c@{}}41.13$\pm$0\end{tabular}               \\
  \multicolumn{1}{c|}{}&GCN       & \begin{tabular}[c]{@{}c@{}}39.43\\ $\pm$0.38\end{tabular} & \begin{tabular}[c]{@{}c@{}}39.34\\ $\pm$0.45\end{tabular} & \begin{tabular}[c]{@{}c@{}}39.56\\ $\pm$0.38\end{tabular}          & \begin{tabular}[c]{@{}c@{}}39.46\\ $\pm$0.59\end{tabular}          & \begin{tabular}[c]{@{}c@{}}39.10\\ $\pm$0.45\end{tabular}          & \begin{tabular}[c]{@{}c@{}}39.08\\ $\pm$0.42\end{tabular}          & \begin{tabular}[c]{@{}c@{}}39.31\\ $\pm$0.19\end{tabular}          & \begin{tabular}[c]{@{}c@{}}39.48\\ $\pm$0.84\end{tabular}          & \begin{tabular}[c]{@{}c@{}}39.07\\ $\pm$0.93\end{tabular}          \\
  \multicolumn{1}{c|}{}&\textbf{GESF}   & \begin{tabular}[c]{@{}c@{}}73.19\\ $\pm$0.44\end{tabular} & \begin{tabular}[c]{@{}c@{}}77.70*\\ $\pm$0.64\end{tabular} & \textbf{\begin{tabular}[c]{@{}c@{}}78.47\\ $\pm$0.38\end{tabular}} & \textbf{\begin{tabular}[c]{@{}c@{}}79.63\\ $\pm$0.46\end{tabular}} & \textbf{\begin{tabular}[c]{@{}c@{}}80.23\\ $\pm$0.49\end{tabular}} & \textbf{\begin{tabular}[c]{@{}c@{}}81.05\\ $\pm$0.69\end{tabular}} & \textbf{\begin{tabular}[c]{@{}c@{}}81.53\\ $\pm$0.36\end{tabular}} & \textbf{\begin{tabular}[c]{@{}c@{}}81.77\\ $\pm$0.24\end{tabular}} & \textbf{\begin{tabular}[c]{@{}c@{}}82.62\\ $\pm$0.59\end{tabular}} \\ \hline
  \multicolumn{1}{c|}{\multirow{6}{*}{\rotatebox{90}{Wikipedia}}}&deepwalk       & \begin{tabular}[c]{@{}c@{}}57.44*\\ $\pm$0.67\end{tabular} & \begin{tabular}[c]{@{}c@{}}62.04*\\ $\pm$0.84\end{tabular}          & \begin{tabular}[c]{@{}c@{}}63.15*\\ $\pm$0.77\end{tabular}          & \begin{tabular}[c]{@{}c@{}}64.77*\\ $\pm$0.56\end{tabular}          & \begin{tabular}[c]{@{}c@{}}65.74*\\ $\pm$0.50\end{tabular}          & \begin{tabular}[c]{@{}c@{}}66.63*\\ $\pm$0.84\end{tabular}          & \begin{tabular}[c]{@{}c@{}}65.69\\ $\pm$1.73\end{tabular}          & \begin{tabular}[c]{@{}c@{}}66.61\\ $\pm$1.06\end{tabular}          & \begin{tabular}[c]{@{}c@{}}66.17\\ $\pm$3.04\end{tabular}          \\
  \multicolumn{1}{c|}{}&node2vec       & \textbf{\begin{tabular}[c]{@{}c@{}}57.65\\ $\pm$1.24\end{tabular}} & \begin{tabular}[c]{@{}c@{}}61.73\\ $\pm$0.56\end{tabular}          & \begin{tabular}[c]{@{}c@{}}62.31\\ $\pm$1.42\end{tabular}          & \begin{tabular}[c]{@{}c@{}}64.23\\ $\pm$1.08\end{tabular}          & \begin{tabular}[c]{@{}c@{}}64.94\\ $\pm$0.007\end{tabular}           & \begin{tabular}[c]{@{}c@{}}66.24\\ $\pm$1.06\end{tabular}          & \begin{tabular}[c]{@{}c@{}}65.63\\ $\pm$1.67\end{tabular}          & \begin{tabular}[c]{@{}c@{}}65.99\\ $\pm$1.36\end{tabular}          & \begin{tabular}[c]{@{}c@{}}66.50\\ $\pm$3.66\end{tabular}          \\
  \multicolumn{1}{c|}{}&MMDW           & \begin{tabular}[c]{@{}c@{}}53.05\\ $\pm$0.54\end{tabular} & \begin{tabular}[c]{@{}c@{}}59.45\\ $\pm$0.35\end{tabular}          & \begin{tabular}[c]{@{}c@{}}62.85\\ $\pm$0.55\end{tabular}          & \begin{tabular}[c]{@{}c@{}}62.42\\ $\pm$0.07\end{tabular}          & \begin{tabular}[c]{@{}c@{}}64.26\\ $\pm$1.09\end{tabular}          & \begin{tabular}[c]{@{}c@{}}66.46\\ $\pm$0.85\end{tabular}          & \begin{tabular}[c]{@{}c@{}}67.50*\\ $\pm$0.48\end{tabular}          & \begin{tabular}[c]{@{}c@{}}67.37*\\ $\pm$0.56\end{tabular}          & \begin{tabular}[c]{@{}c@{}}70.20*\\ $\pm$1.22\end{tabular}          \\
  \multicolumn{1}{c|}{}&planetoid      & \begin{tabular}[c]{@{}c@{}}10.02$\pm$0\end{tabular}      & \begin{tabular}[c]{@{}c@{}}12.62$\pm$0\end{tabular}               & \begin{tabular}[c]{@{}c@{}}12.04$\pm$0\end{tabular}               & \begin{tabular}[c]{@{}c@{}}14.06$\pm$0\end{tabular}               & \begin{tabular}[c]{@{}c@{}}16.87$\pm$0\end{tabular}               & \begin{tabular}[c]{@{}c@{}}21.88$\pm$0\end{tabular}               & \begin{tabular}[c]{@{}c@{}}28.39$\pm$0\end{tabular}               & \begin{tabular}[c]{@{}c@{}}14.52$\pm$0\end{tabular}               & \begin{tabular}[c]{@{}c@{}}55.60$\pm$0\end{tabular}               \\
  \multicolumn{1}{c|}{}&GCN            & \begin{tabular}[c]{@{}c@{}}11.01\\ $\pm$0.47\end{tabular} & \begin{tabular}[c]{@{}c@{}}11.19\\ $\pm$0.23\end{tabular}          & \begin{tabular}[c]{@{}c@{}}11.38\\ $\pm$0.48\end{tabular}          & \begin{tabular}[c]{@{}c@{}}11.34\\ $\pm$0.33\end{tabular}          & \begin{tabular}[c]{@{}c@{}}11.47\\ $\pm$0.43\end{tabular}          & \begin{tabular}[c]{@{}c@{}}10.85\\ $\pm$0.84\end{tabular}          & \begin{tabular}[c]{@{}c@{}}10.49\\ $\pm$1.44\end{tabular}          & \begin{tabular}[c]{@{}c@{}}12.20\\ $\pm$1.15\end{tabular}          & \begin{tabular}[c]{@{}c@{}}11.81\\ $\pm$0.24\end{tabular}          \\
  \multicolumn{1}{c|}{}&\textbf{GESF}  & \begin{tabular}[c]{@{}c@{}}55.38\\ $\pm$1.66\end{tabular} & \textbf{\begin{tabular}[c]{@{}c@{}}63.12\\ $\pm$1.07\end{tabular}} & \textbf{\begin{tabular}[c]{@{}c@{}}64.92\\ $\pm$1.23\end{tabular}} & \textbf{\begin{tabular}[c]{@{}c@{}}67.36\\ $\pm$0.48\end{tabular}} & \textbf{\begin{tabular}[c]{@{}c@{}}68.63\\ $\pm$0.65\end{tabular}} & \textbf{\begin{tabular}[c]{@{}c@{}}70.72\\ $\pm$0.77\end{tabular}} & \textbf{\begin{tabular}[c]{@{}c@{}}70.37\\ $\pm$2.59\end{tabular}} & \textbf{\begin{tabular}[c]{@{}c@{}}72.84\\ $\pm$1.56\end{tabular}} & \textbf{\begin{tabular}[c]{@{}c@{}}73.61\\ $\pm$2.14\end{tabular}} \\ \hline
  \multicolumn{1}{c|}{\multirow{6}{*}{\rotatebox{90}{Email-eu}}}&deepwalk       & \begin{tabular}[c]{@{}c@{}}61.11*\\ $\pm$4.71\end{tabular}          & \begin{tabular}[c]{@{}c@{}}67.81*\\ $\pm$2.59\end{tabular}          & \begin{tabular}[c]{@{}c@{}}71.15*\\ $\pm$2.73\end{tabular}          & \begin{tabular}[c]{@{}c@{}}72.54*\\ $\pm$1.79\end{tabular}          & \begin{tabular}[c]{@{}c@{}}75.30*\\ $\pm$0.58\end{tabular}          & \begin{tabular}[c]{@{}c@{}}74.53*\\ $\pm$1.38\end{tabular}          & \begin{tabular}[c]{@{}c@{}}75.48\\ $\pm$1.92\end{tabular}          & \begin{tabular}[c]{@{}c@{}}75.72*\\ $\pm$2.53\end{tabular}          & \begin{tabular}[c]{@{}c@{}}77.60*\\ $\pm$3.56\end{tabular}          \\
  \multicolumn{1}{c|}{}&node2vec       & \begin{tabular}[c]{@{}c@{}}60.60\\ $\pm$4.97\end{tabular}          & \begin{tabular}[c]{@{}c@{}}66.59\\ $\pm$2.81\end{tabular}          & \begin{tabular}[c]{@{}c@{}}69.84\\ $\pm$2.39\end{tabular}          & \begin{tabular}[c]{@{}c@{}}71.91\\ $\pm$1.16\end{tabular}          & \begin{tabular}[c]{@{}c@{}}74.38\\ $\pm$0.64\end{tabular}          & \begin{tabular}[c]{@{}c@{}}74.28\\ $\pm$2.29\end{tabular}          & \begin{tabular}[c]{@{}c@{}}75.81*\\ $\pm$1.76\end{tabular}          & \begin{tabular}[c]{@{}c@{}}75.62\\ $\pm$2.20\end{tabular}          & \begin{tabular}[c]{@{}c@{}}76.00\\ $\pm$2.68\end{tabular}          \\
  \multicolumn{1}{c|}{}&MMDW           & \begin{tabular}[c]{@{}c@{}}36.76\\ $\pm$0.90\end{tabular}          & \begin{tabular}[c]{@{}c@{}}40.72\\ $\pm$2.54\end{tabular}          & \begin{tabular}[c]{@{}c@{}}43.22\\ $\pm$0.61\end{tabular}          & \begin{tabular}[c]{@{}c@{}}43.01\\ $\pm$1.52\end{tabular}          & \begin{tabular}[c]{@{}c@{}}46.11\\ $\pm$1.23\end{tabular}          & \begin{tabular}[c]{@{}c@{}}44.94\\ $\pm$0.38\end{tabular}          & \begin{tabular}[c]{@{}c@{}}48.08\\ $\pm$1.21\end{tabular}          & \begin{tabular}[c]{@{}c@{}}53.62\\ $\pm$0.83\end{tabular}          & \begin{tabular}[c]{@{}c@{}}65.50\\ $\pm$1.34\end{tabular}          \\
  \multicolumn{1}{c|}{}&planetoid      & \begin{tabular}[c]{@{}c@{}}46.47$\pm$0\end{tabular}               & \begin{tabular}[c]{@{}c@{}}58.76$\pm$0\end{tabular}               & \begin{tabular}[c]{@{}c@{}}55.46$\pm$0\end{tabular}               & \begin{tabular}[c]{@{}c@{}}54.30$\pm$0\end{tabular}               & \begin{tabular}[c]{@{}c@{}}52.88$\pm$0\end{tabular}               & \begin{tabular}[c]{@{}c@{}}50.12$\pm$0\end{tabular}               & \begin{tabular}[c]{@{}c@{}}49.67$\pm$0\end{tabular}               & \begin{tabular}[c]{@{}c@{}}55.94$\pm$0\end{tabular}               & \begin{tabular}[c]{@{}c@{}}57.43$\pm$0\end{tabular}               \\
  \multicolumn{1}{c|}{}&GCN            & \begin{tabular}[c]{@{}c@{}}0.63\\ $\pm$0.13\end{tabular}          & \begin{tabular}[c]{@{}c@{}}0.58\\ $\pm$0.07\end{tabular}          & \begin{tabular}[c]{@{}c@{}}0.80\\ $\pm$0.08\end{tabular}          & \begin{tabular}[c]{@{}c@{}}0.66\\ $\pm$0.00\end{tabular}          & \begin{tabular}[c]{@{}c@{}}0.73\\ $\pm$0.41\end{tabular}          & \begin{tabular}[c]{@{}c@{}}0.66\\ $\pm$0.14\end{tabular}          & \begin{tabular}[c]{@{}c@{}}4.58\\ $\pm$5.64\end{tabular}          & \begin{tabular}[c]{@{}c@{}}0.50\\ $\pm$0.00\end{tabular}          & \begin{tabular}[c]{@{}c@{}}0.50\\ $\pm$0.71\end{tabular}          \\
  \multicolumn{1}{c|}{}&\textbf{GESF}  & \textbf{\begin{tabular}[c]{@{}c@{}}64.97\\ $\pm$6.80\end{tabular}} & \textbf{\begin{tabular}[c]{@{}c@{}}68.78\\ $\pm$2.18\end{tabular}} & \textbf{\begin{tabular}[c]{@{}c@{}}72.31\\ $\pm$2.56\end{tabular}} & \textbf{\begin{tabular}[c]{@{}c@{}}73.74\\ $\pm$0.67\end{tabular}} & \textbf{\begin{tabular}[c]{@{}c@{}}75.43\\ $\pm$0.99\end{tabular}} & \textbf{\begin{tabular}[c]{@{}c@{}}76.70\\ $\pm$1.42\end{tabular}} & \textbf{\begin{tabular}[c]{@{}c@{}}77.30\\ $\pm$2.77\end{tabular}} & \textbf{\begin{tabular}[c]{@{}c@{}}79.27\\ $\pm$2.50\end{tabular}} & \textbf{\begin{tabular}[c]{@{}c@{}}81.67\\ $\pm$1.53\end{tabular}} \\ \bottomrule
  
  \end{tabular}
  \\
  $\dag$ MMDW takes over 64GB memory on the experiment of Pubmed. The results of MMDW are not available in this comparison.
  }
  \end{table}

\subsection{Comparison on heterogeneous graphs}
We next conduct evaluation on heterogeneous graphs, where learned node embedding vectors are used for  multi-label classification.
%We next move to the experiments on heterogeneous networks, which includes more than one type of vertexes. We consider the multi-label classification problem. Given a graph with multi-type of vertexes with partially labeled, the goal is to learn the presentation for all vertexes for predicting its labels.

% Therefore, we take our experiments on two datasets that consist of two types of vertexes physically. 

%These two datasets aim at the learning problem of multi-label classification and for each instance. Therefore, we introduce our learning framework to solve such problem.
\paragraph{Datasets}
%Here, we have two datasets to perform our experiments and each of them have two types of nodes and complex network structures between and inside node groups.
The used datasets include
{\small
\begin{itemize}
\item \textbf{DBLP} \citep{ji2010graph} is an academic community network. Here we obtain a subset of the large network with two types of nodes, authors and key words from authors' publications.
% \footnote{\rc To Shupeng, what kind of words? key words?}. %There are 27,199 authors and 3,699 words in total. 
The generated subgraph includes $27K$ (authors) + $3.7K$ (key words) vertexes. 
%Authors have the connection to other authors and authors also have links to the words. 
The link between a pair of author indicates { the coauthor relationships}, and the link between an author and a word means { the word belongs to at least one publication of this author}.
There are 66,832 edges between pairs of authors and 338,210 edges between authors and words. Each node can have multiple labels out of four in total.
 %It is a four class multi-labeled problem.
%possible that each author belongs to up to 4 classes. 
\item \textbf{BlogCatalog} \citep{Wang-etal10} is a social media network with 55,814 users and according to the interests of users, they are classified into multiple overlapped groups. We take the five largest groups to evaluate the performance of methods. Users and tags are two types of nodes.
% Once we treat users as one type of nodes in the network, there are another type of nodes named tags. 
The 5,413 tags are generated by users with their blogs as keywords. Therefore, tags are shared with different users and also have connections since some tags are generated from the same blogs. The number of edges between users, between tags and between users and tags are about 1.4M, 619K and 343K respectively.
\end{itemize}
}

\paragraph{Methods for Comparison}
{\bc
To illustrate the validation of the performance of GESF on heterogeneous graphs, we conduct the experiments on two stages: (1) 
comparing GESF with Deepwalk \citep{perozzi2014deepwalk} and node2vec \citep{grover2016node2vec} on the graphs by treating all nodes as the same type (GESF with $k=1$ in a homogeneous setting);
(2) comparing GESF with the state-of-art heterogeneous graph embedding method, \textit{metapath2vec} \citep{dong2017metapath2vec}, in a heterogeneous setting. The hyper-parameters of the method are fine-tuned and \textit{metapath2vec++} is chosen as the option for the comparison.
  
%To illustrate the validation of the performance of GESF on heterogeneous graphs, we conduct the experiments into two stages: (1) comparing to the methods on homogeneous graphs to claim the improvement is brought with the rich types information of heterogeneous graphs; (2) comparing to the state-of-art heterogeneous graph embedding method to show the capability of the GESF.
% To illustrate the validation of our model working on heterogeneous networks, we compare the performance on both methods for homogeneous networks and for heterogeneous ones.
%When we apply the methods for homogeneous graphs, we treat all vertexes as the same type and Deepwalk \citep{perozzi2014deepwalk} and node2vec \citep{grover2016node2vec} are introduced to be compared here. We also compare the performance of GESF on homogeneous graphs.
%  to claim the valid performance improvement is from the knowledge of the heterogeneous network structures.
%On the other hands, we compare the method working on heterogeneous graphs, \textit{metapath2vec} \citep{dong2017metapath2vec}, which introduced the random walk in heterogeneous graphs and the hyper-parameters of the method is fine-tuned and \textit{metapath2vec++} is chosen as the option for the comparison.
}

\paragraph{Experiment Setup and Results}

For fair comparison, the dimension of representation vectors is chosen to be the same for all algorithms (the dimension is 64). We fine-tune the hyper-parameter for all of them. The details of GESF for multi-label case are as follows.
{\small
\begin{itemize}
  \item \textbf{Supervised Component:} Since it is a multi-label classification problem, each label can be treated as a binary classification problem. Therefore, we apply logistic regression for each label and for an arbitrary instance $x$ and the $i$-th label $y_i$, the supervised component is formulated as 
  $l(x, y_i) = \log(1 + \exp(w_i^\top x + b_i)) - y_i(w_i^\top x + b_i)$, 
  where $(w_i\in\mathbb{R}^d, b_i\in\mathbb{R})$ is the classifier for the $i$-th label. Therefore, the supervised component in Eq. \eqref{eq:over-all} is defined as $\sum_{v\in \mathcal{V}_{\text{label}}} \sum_i l(x, y_i) + \lambda_w \mathcal{R}(w)$ and $\mathcal{R}(w)$ is the regularization component for $w$, where $\lambda_w$ is chosen to be $10^{-4}$.
  \item \textbf{Unsupervised Embedding Mapping Component:} We design a two-layes NN with a 64-dimensional hidden layer for each type of nodes with the types of nodes in its neighborhood to formulate the mapping from embedding of neighbors to the embedding of the target node. We also form a two-layer 1-to-1 NN wth a 3 dimensional hidden layer to construct the matrix function for the adjacency matrix $A$ for the whole graph. We pre-process the matrix $A$ with an eigenvalue decomposition by preserving the highest 1000 eigenvalues in default. We denote the nodes to be classified as type 1 and the other type as type 2. The balance hyper-parameter $[\lambda_1, \lambda_2]$ is set to be [0.2, 200]. 
\end{itemize}
}

For the datasets DBLP and BlogCatalog, we carry out the experiments on each of them and compare the performance among all methods mentioned above. Since it is a multi-label classification task, we take \textit{f1-score(macro, micro)} as the evaluation score for the comparison. The percentage of labeled samples is chosen from 10\% to 90\%, while the remaining samples are used for evaluation. We repeat all experiments for three times and report the mean and standard deviation of their performance in the Tables \ref{tab:macro-dblp}. 
% We highlight the best performance for each dataset with \textbf{bold} font style and  the second best results with a ``*''. 
We can observe that in most cases, {\pc
GESF in heterogeneous setting has the best performance, while 
GESF in homogeneous setting achieves the second best results, demonstrating the validity of our proposed universal graph embedding mechanism.}
%our method outperforms other methods.

\begin{table}[!htp]
  \centering
  \caption{F1-score (macro, micro) (\%) of Multi-label Classification Experiments}
  \label{tab:macro-dblp}
  {\scriptsize
  \begin{tabular}{@{}ccccccccccc@{}}
  \toprule
  \multicolumn{1}{c}{}& training\% & 10.00\%                                                               & 20.00\%                                                               & 30.00\%                                                               & 40.00\%                                                               & 50.00\%                                                               & 60.00\%                                                               & 70.00\%                                                               & 80.00\%                                                               & 90.00\%                                                               \\ \midrule
  \multicolumn{1}{c|}{\multirow{5}{*}{
    \rotatebox{90}{
    \begin{tabular}[c]{@{}c@{}}
      DBLP (macro)
    \end{tabular}
    }
  }}&Deepwalk       & \begin{tabular}[c]{@{}c@{}}74.48*\\ $\pm$0.34\end{tabular}          & \begin{tabular}[c]{@{}c@{}}74.87\\ $\pm$0.14\end{tabular}          & \begin{tabular}[c]{@{}c@{}}74.95\\ $\pm$0.15\end{tabular}          & \begin{tabular}[c]{@{}c@{}}75.10\\ $\pm$0.22\end{tabular}          & \begin{tabular}[c]{@{}c@{}}75.07\\ $\pm$0.22\end{tabular}          & \begin{tabular}[c]{@{}c@{}}75.44\\ $\pm$0.22\end{tabular}          & \begin{tabular}[c]{@{}c@{}}75.33\\ $\pm$0.33\end{tabular}          & \begin{tabular}[c]{@{}c@{}}74.75\\ $\pm$0.31\end{tabular}          & \begin{tabular}[c]{@{}c@{}}75.36\\ $\pm$0.73\end{tabular}          \\
  \multicolumn{1}{c|}{}&Node2vec       & \begin{tabular}[c]{@{}c@{}}73.37\\ $\pm$0.24\end{tabular}          & \begin{tabular}[c]{@{}c@{}}73.94\\ $\pm$0.11\end{tabular}          & \begin{tabular}[c]{@{}c@{}}74.00\\ $\pm$0.18\end{tabular}          & \begin{tabular}[c]{@{}c@{}}74.25\\ $\pm$0.23\end{tabular}          & \begin{tabular}[c]{@{}c@{}}74.06\\ $\pm$0.31\end{tabular}          & \begin{tabular}[c]{@{}c@{}}74.52\\ $\pm$0.21\end{tabular}          & \begin{tabular}[c]{@{}c@{}}74.52\\ $\pm$0.35\end{tabular}          & \begin{tabular}[c]{@{}c@{}}74.32\\ $\pm$0.26\end{tabular}          & \begin{tabular}[c]{@{}c@{}}74.55\\ $\pm$0.57\end{tabular}          \\
  \multicolumn{1}{c|}{}&Metapath2vec++ & \textbf{\begin{tabular}[c]{@{}c@{}}74.82\\ $\pm$0.30\end{tabular}} & \begin{tabular}[c]{@{}c@{}}75.27\\ $\pm$0.08\end{tabular}          & \begin{tabular}[c]{@{}c@{}}75.55\\ $\pm$0.12\end{tabular}          & \begin{tabular}[c]{@{}c@{}}75.63\\ $\pm$0.27\end{tabular}          & \begin{tabular}[c]{@{}c@{}}75.53\\ $\pm$0.22\end{tabular}          & \begin{tabular}[c]{@{}c@{}}75.92\\ $\pm$0.24\end{tabular}          & \begin{tabular}[c]{@{}c@{}}75.92\\ $\pm$0.42\end{tabular}          & \begin{tabular}[c]{@{}c@{}}75.56\\ $\pm$0.36\end{tabular}          & \begin{tabular}[c]{@{}c@{}}76.09\\ $\pm$0.46\end{tabular}          \\
  \multicolumn{1}{c|}{}&\begin{tabular}[c]{c@{}c@{}}\textbf{GESF}\\(Homogeneous)\end{tabular}  & \begin{tabular}[c]{@{}c@{}}72.51\\ $\pm$0.91\end{tabular}          & \begin{tabular}[c]{@{}c@{}}76.89*\\ $\pm$0.06\end{tabular}          & \begin{tabular}[c]{@{}c@{}}79.91*\\ $\pm$0.09\end{tabular}          & \begin{tabular}[c]{@{}c@{}}82.14*\\ $\pm$0.26\end{tabular}          & \begin{tabular}[c]{@{}c@{}}84.60*\\ $\pm$0.42\end{tabular}          & \begin{tabular}[c]{@{}c@{}}86.34*\\ $\pm$0.52\end{tabular}          & \begin{tabular}[c]{@{}c@{}}87.49*\\ $\pm$0.31\end{tabular}          & \begin{tabular}[c]{@{}c@{}}88.21*\\ $\pm$0.25\end{tabular}          & \begin{tabular}[c]{@{}c@{}}89.61*\\ $\pm$0.58\end{tabular}          \\
  \multicolumn{1}{c|}{}&\begin{tabular}[c]{c@{}c@{}}\textbf{GESF}\\(Heterogeneous)\end{tabular} & \begin{tabular}[c]{@{}c@{}}74.06\\ $\pm$0.37\end{tabular}          & \textbf{\begin{tabular}[c]{@{}c@{}}78.43\\ $\pm$0.61\end{tabular}} & \textbf{\begin{tabular}[c]{@{}c@{}}81.00\\ $\pm$0.19\end{tabular}} & \textbf{\begin{tabular}[c]{@{}c@{}}83.18\\ $\pm$0.16\end{tabular}} & \textbf{\begin{tabular}[c]{@{}c@{}}84.95\\ $\pm$0.22\end{tabular}} & \textbf{\begin{tabular}[c]{@{}c@{}}86.91\\ $\pm$0.54\end{tabular}} & \textbf{\begin{tabular}[c]{@{}c@{}}88.30\\ $\pm$0.29\end{tabular}} & \textbf{\begin{tabular}[c]{@{}c@{}}89.18\\ $\pm$0.17\end{tabular}} & \textbf{\begin{tabular}[c]{@{}c@{}}90.37\\ $\pm$0.38\end{tabular}} \\ \hline
    \multicolumn{1}{c|}{\multirow{5}{*}{
      \rotatebox{90}{
      \begin{tabular}[c]{@{}c@{}}
        DBLP (micro)
      \end{tabular}
      }
    }}&
  deepwalk       & \begin{tabular}[c]{@{}c@{}}76.65\\ $\pm$0.25\end{tabular}          & \begin{tabular}[c]{@{}c@{}}77.03\\ $\pm$0.19\end{tabular}          & \begin{tabular}[c]{@{}c@{}}77.15\\ $\pm$0.15\end{tabular}          & \begin{tabular}[c]{@{}c@{}}77.21\\ $\pm$0.15\end{tabular}          & \begin{tabular}[c]{@{}c@{}}77.20\\ $\pm$0.17\end{tabular}          & \begin{tabular}[c]{@{}c@{}}77.60\\ $\pm$0.20\end{tabular}          & \begin{tabular}[c]{@{}c@{}}77.44\\ $\pm$0.31\end{tabular}          & \begin{tabular}[c]{@{}c@{}}76.87\\ $\pm$0.35\end{tabular}          & \begin{tabular}[c]{@{}c@{}}77.54\\ $\pm$0.72\end{tabular}          \\
  \multicolumn{1}{c|}{}&node2vec       & \begin{tabular}[c]{@{}c@{}}75.65\\ $\pm$0.16\end{tabular}          & \begin{tabular}[c]{@{}c@{}}76.21\\ $\pm$0.12\end{tabular}          & \begin{tabular}[c]{@{}c@{}}76.30\\ $\pm$0.16\end{tabular}          & \begin{tabular}[c]{@{}c@{}}76.48\\ $\pm$0.18\end{tabular}          & \begin{tabular}[c]{@{}c@{}}76.33\\ $\pm$0.25\end{tabular}          & \begin{tabular}[c]{@{}c@{}}76.82\\ $\pm$0.22\end{tabular}          & \begin{tabular}[c]{@{}c@{}}76.76\\ $\pm$0.33\end{tabular}          & \begin{tabular}[c]{@{}c@{}}76.44\\ $\pm$0.34\end{tabular}          & \begin{tabular}[c]{@{}c@{}}76.73\\ $\pm$0.55\end{tabular}          \\
  \multicolumn{1}{c|}{}&metapath2vec++ & \begin{tabular}[c]{@{}c@{}}76.98*\\ $\pm$0.21\end{tabular}          & \begin{tabular}[c]{@{}c@{}}77.38\\ $\pm$0.10\end{tabular}          & \begin{tabular}[c]{@{}c@{}}77.66\\ $\pm$0.08\end{tabular}          & \begin{tabular}[c]{@{}c@{}}77.70\\ $\pm$0.21\end{tabular}          & \begin{tabular}[c]{@{}c@{}}77.61\\ $\pm$0.18\end{tabular}          & \begin{tabular}[c]{@{}c@{}}78.04\\ $\pm$0.18\end{tabular}          & \begin{tabular}[c]{@{}c@{}}77.95\\ $\pm$0.39\end{tabular}          & \begin{tabular}[c]{@{}c@{}}77.54\\ $\pm$0.36\end{tabular}          & \begin{tabular}[c]{@{}c@{}}78.02\\ $\pm$0.47\end{tabular}          \\
  \multicolumn{1}{c|}{}&\begin{tabular}[c]{c@{}c@{}}\textbf{GESF}\\(Homogeneous)\end{tabular}  & \begin{tabular}[c]{@{}c@{}}74.37\\ $\pm$0.88\end{tabular}          & \begin{tabular}[c]{@{}c@{}}78.52*\\ $\pm$0.09\end{tabular}          & \begin{tabular}[c]{@{}c@{}}81.47*\\ $\pm$0.11\end{tabular}          & \begin{tabular}[c]{@{}c@{}}83.56*\\ $\pm$0.28\end{tabular}          & \begin{tabular}[c]{@{}c@{}}85.81*\\ $\pm$0.42\end{tabular}          & \begin{tabular}[c]{@{}c@{}}87.51*\\ $\pm$0.54\end{tabular}          & \begin{tabular}[c]{@{}c@{}}88.44*\\ $\pm$0.27\end{tabular}          & \begin{tabular}[c]{@{}c@{}}89.16*\\ $\pm$0.22\end{tabular}          & \begin{tabular}[c]{@{}c@{}}90.54*\\ $\pm$0.50\end{tabular}          \\
  \multicolumn{1}{c|}{}&\begin{tabular}[c]{c@{}c@{}}\textbf{GESF}\\(Heterogeneous)\end{tabular} & \textbf{\begin{tabular}[c]{@{}c@{}}77.06\\ $\pm$0.29\end{tabular}} & \textbf{\begin{tabular}[c]{@{}c@{}}80.67\\ $\pm$0.45\end{tabular}} & \textbf{\begin{tabular}[c]{@{}c@{}}82.87\\ $\pm$0.13\end{tabular}} & \textbf{\begin{tabular}[c]{@{}c@{}}84.75\\ $\pm$0.16\end{tabular}} & \textbf{\begin{tabular}[c]{@{}c@{}}86.29\\ $\pm$0.22\end{tabular}} & \textbf{\begin{tabular}[c]{@{}c@{}}88.09\\ $\pm$0.47\end{tabular}} & \textbf{\begin{tabular}[c]{@{}c@{}}89.27\\ $\pm$0.25\end{tabular}} & \textbf{\begin{tabular}[c]{@{}c@{}}90.11\\ $\pm$0.13\end{tabular}} & \textbf{\begin{tabular}[c]{@{}c@{}}91.22\\ $\pm$0.43\end{tabular}} \\ \hline
    \multicolumn{1}{c|}{\multirow{5}{*}{
      \rotatebox{90}{
      \begin{tabular}[c]{@{}c@{}}
        BlogCatalog (macro)
      \end{tabular}
      }
    }}&
  deepwalk       & \begin{tabular}[c]{@{}c@{}}45.13\\ $\pm$0.68\end{tabular}          & \begin{tabular}[c]{@{}c@{}}44.64\\ $\pm$0.21\end{tabular}          & \begin{tabular}[c]{@{}c@{}}44.52\\ $\pm$0.42\end{tabular}          & \begin{tabular}[c]{@{}c@{}}44.64\\ $\pm$0.23\end{tabular}          & \begin{tabular}[c]{@{}c@{}}44.32\\ $\pm$0.21\end{tabular}          & \begin{tabular}[c]{@{}c@{}}44.36\\ $\pm$0.46\end{tabular}          & \begin{tabular}[c]{@{}c@{}}44.78\\ $\pm$0.48\end{tabular}          & \begin{tabular}[c]{@{}c@{}}44.33\\ $\pm$0.62\end{tabular}          & \begin{tabular}[c]{@{}c@{}}44.49\\ $\pm$0.86\end{tabular}          \\
  \multicolumn{1}{c|}{}&node2vec       & \begin{tabular}[c]{@{}c@{}}45.78\\ $\pm$0.68\end{tabular}          & \begin{tabular}[c]{@{}c@{}}45.42\\ $\pm$0.30\end{tabular}          & \begin{tabular}[c]{@{}c@{}}45.28\\ $\pm$0.32\end{tabular}          & \begin{tabular}[c]{@{}c@{}}45.41\\ $\pm$0.18\end{tabular}          & \begin{tabular}[c]{@{}c@{}}45.17\\ $\pm$0.20\end{tabular}          & \begin{tabular}[c]{@{}c@{}}45.19\\ $\pm$0.36\end{tabular}          & \begin{tabular}[c]{@{}c@{}}45.57\\ $\pm$0.13\end{tabular}          & \begin{tabular}[c]{@{}c@{}}45.04\\ $\pm$0.31\end{tabular}          & \begin{tabular}[c]{@{}c@{}}44.96\\ $\pm$0.55\end{tabular}          \\
  \multicolumn{1}{c|}{}&metapath2vec++ & \begin{tabular}[c]{@{}c@{}}37.46\\ $\pm$0.61\end{tabular}          & \begin{tabular}[c]{@{}c@{}}36.72\\ $\pm$0.36\end{tabular}          & \begin{tabular}[c]{@{}c@{}}36.69\\ $\pm$0.29\end{tabular}          & \begin{tabular}[c]{@{}c@{}}36.58\\ $\pm$0.28\end{tabular}          & \begin{tabular}[c]{@{}c@{}}36.74\\ $\pm$0.17\end{tabular}          & \begin{tabular}[c]{@{}c@{}}36.90\\ $\pm$0.33\end{tabular}          & \begin{tabular}[c]{@{}c@{}}36.89\\ $\pm$0.32\end{tabular}          & \begin{tabular}[c]{@{}c@{}}36.42\\ $\pm$0.41\end{tabular}          & \begin{tabular}[c]{@{}c@{}}36.16\\ $\pm$0.98\end{tabular}          \\
  \multicolumn{1}{c|}{}&\begin{tabular}[c]{c@{}c@{}}\textbf{GESF}\\(Homogeneous)\end{tabular}  & \begin{tabular}[c]{@{}c@{}}47.63*\\ $\pm$3.16\end{tabular}          & \begin{tabular}[c]{@{}c@{}}50.99*\\ $\pm$0.09\end{tabular}          & \begin{tabular}[c]{@{}c@{}}51.70*\\ $\pm$0.19\end{tabular}          & \begin{tabular}[c]{@{}c@{}}50.04*\\ $\pm$1.90\end{tabular}          & \begin{tabular}[c]{@{}c@{}}50.60*\\ $\pm$1.73\end{tabular}          & \begin{tabular}[c]{@{}c@{}}50.34*\\ $\pm$0.55\end{tabular}          & \begin{tabular}[c]{@{}c@{}}52.20*\\ $\pm$1.11\end{tabular}          & \begin{tabular}[c]{@{}c@{}}51.88*\\ $\pm$1.08\end{tabular}          & \begin{tabular}[c]{@{}c@{}}51.36*\\ $\pm$0.26\end{tabular}          \\
  \multicolumn{1}{c|}{}&\begin{tabular}[c]{c@{}c@{}}\textbf{GESF}\\(Heterogeneous)\end{tabular} & \textbf{\begin{tabular}[c]{@{}c@{}}49.65\\ $\pm$0.63\end{tabular}} & \textbf{\begin{tabular}[c]{@{}c@{}}51.47\\ $\pm$0.40\end{tabular}} & \textbf{\begin{tabular}[c]{@{}c@{}}52.69\\ $\pm$0.24\end{tabular}} & \textbf{\begin{tabular}[c]{@{}c@{}}53.37\\ $\pm$0.45\end{tabular}} & \textbf{\begin{tabular}[c]{@{}c@{}}53.73\\ $\pm$0.01\end{tabular}} & \textbf{\begin{tabular}[c]{@{}c@{}}53.97\\ $\pm$0.43\end{tabular}} & \textbf{\begin{tabular}[c]{@{}c@{}}53.83\\ $\pm$0.62\end{tabular}} & \textbf{\begin{tabular}[c]{@{}c@{}}54.07\\ $\pm$0.40\end{tabular}} & \textbf{\begin{tabular}[c]{@{}c@{}}53.36\\ $\pm$0.84\end{tabular}} \\ \bottomrule
    \multicolumn{1}{c|}{\multirow{5}{*}{
      \rotatebox{90}{
      \begin{tabular}[c]{@{}c@{}}
        BlogCatalog (micro)
      \end{tabular}
      }
    }}&
  deepwalk       & \begin{tabular}[c]{@{}c@{}}47.93\\ $\pm$0.48\end{tabular}          & \begin{tabular}[c]{@{}c@{}}47.36\\ $\pm$0.15\end{tabular}          & \begin{tabular}[c]{@{}c@{}}47.25\\ $\pm$0.45\end{tabular}          & \begin{tabular}[c]{@{}c@{}}47.30\\ $\pm$0.19\end{tabular}          & \begin{tabular}[c]{@{}c@{}}47.07\\ $\pm$0.16\end{tabular}          & \begin{tabular}[c]{@{}c@{}}47.09\\ $\pm$0.48\end{tabular}          & \begin{tabular}[c]{@{}c@{}}47.39\\ $\pm$0.49\end{tabular}          & \begin{tabular}[c]{@{}c@{}}47.02\\ $\pm$0.72\end{tabular}          & \begin{tabular}[c]{@{}c@{}}47.27\\ $\pm$0.82\end{tabular}          \\
  \multicolumn{1}{c|}{}&node2vec       & \begin{tabular}[c]{@{}c@{}}48.52\\ $\pm$0.50\end{tabular}          & \begin{tabular}[c]{@{}c@{}}48.20\\ $\pm$0.18\end{tabular}          & \begin{tabular}[c]{@{}c@{}}48.01\\ $\pm$0.31\end{tabular}          & \begin{tabular}[c]{@{}c@{}}48.18\\ $\pm$0.17\end{tabular}          & \begin{tabular}[c]{@{}c@{}}47.97\\ $\pm$0.25\end{tabular}          & \begin{tabular}[c]{@{}c@{}}48.04\\ $\pm$0.39\end{tabular}          & \begin{tabular}[c]{@{}c@{}}48.22\\ $\pm$0.15\end{tabular}          & \begin{tabular}[c]{@{}c@{}}47.83\\ $\pm$0.32\end{tabular}          & \begin{tabular}[c]{@{}c@{}}47.95\\ $\pm$0.55\end{tabular}          \\
  \multicolumn{1}{c|}{}&metapath2vec++ & \begin{tabular}[c]{@{}c@{}}40.90\\ $\pm$0.34\end{tabular}          & \begin{tabular}[c]{@{}c@{}}40.06\\ $\pm$0.21\end{tabular}          & \begin{tabular}[c]{@{}c@{}}40.10\\ $\pm$0.25\end{tabular}          & \begin{tabular}[c]{@{}c@{}}39.97\\ $\pm$0.22\end{tabular}          & \begin{tabular}[c]{@{}c@{}}40.04\\ $\pm$0.17\end{tabular}          & \begin{tabular}[c]{@{}c@{}}40.22\\ $\pm$0.29\end{tabular}          & \begin{tabular}[c]{@{}c@{}}40.21\\ $\pm$0.22\end{tabular}          & \begin{tabular}[c]{@{}c@{}}39.82\\ $\pm$0.44\end{tabular}          & \begin{tabular}[c]{@{}c@{}}39.66\\ $\pm$0.82\end{tabular}          \\
  \multicolumn{1}{c|}{}&\begin{tabular}[c]{c@{}c@{}}\textbf{GESF}\\(Homogeneous)\end{tabular}  & \begin{tabular}[c]{@{}c@{}}50.75*\\ $\pm$2.74\end{tabular}          & \begin{tabular}[c]{@{}c@{}}54.26*\\ $\pm$0.12\end{tabular}          & \begin{tabular}[c]{@{}c@{}}55.08*\\ $\pm$0.08\end{tabular}          & \begin{tabular}[c]{@{}c@{}}53.41*\\ $\pm$1.92\end{tabular}          & \begin{tabular}[c]{@{}c@{}}53.88*\\ $\pm$1.70\end{tabular}          & \begin{tabular}[c]{@{}c@{}}53.57*\\ $\pm$0.50\end{tabular}          & \begin{tabular}[c]{@{}c@{}}55.36*\\ $\pm$1.30\end{tabular}          & \begin{tabular}[c]{@{}c@{}}54.93*\\ $\pm$1.07\end{tabular}          & \begin{tabular}[c]{@{}c@{}}55.03*\\ $\pm$0.52\end{tabular}          \\
  \multicolumn{1}{c|}{}&\begin{tabular}[c]{c@{}c@{}}\textbf{GESF}\\(Heterogeneous)\end{tabular} & \textbf{\begin{tabular}[c]{@{}c@{}}53.06\\ $\pm$0.45\end{tabular}} & \textbf{\begin{tabular}[c]{@{}c@{}}54.77\\ $\pm$0.21\end{tabular}} & \textbf{\begin{tabular}[c]{@{}c@{}}55.93\\ $\pm$0.17\end{tabular}} & \textbf{\begin{tabular}[c]{@{}c@{}}56.41\\ $\pm$0.31\end{tabular}} & \textbf{\begin{tabular}[c]{@{}c@{}}56.86\\ $\pm$0.06\end{tabular}} & \textbf{\begin{tabular}[c]{@{}c@{}}57.28\\ $\pm$0.38\end{tabular}} & \textbf{\begin{tabular}[c]{@{}c@{}}57.13\\ $\pm$0.43\end{tabular}} & \textbf{\begin{tabular}[c]{@{}c@{}}57.43\\ $\pm$0.14\end{tabular}} & \textbf{\begin{tabular}[c]{@{}c@{}}56.98\\ $\pm$0.53\end{tabular}} \\ \bottomrule
  
  \end{tabular}
  }
  \end{table}

\section{Conclusion and Future Work}
{\bc
To summarize the whole work, GESF is proposed for a most general graph embedding solution with a theoretical guarantee for the effectiveness of the whole model and impressive experiment results compared to the state-of-art algorithms. For the future work, our model can be extended to more general case e.g, involving the node content or attributes into the embedding learning. One possible solution is to introduce the attributes as a special type of neighbors in the graph and we can utilize multiple set functions to map the embeddings within a more complex heterogeneous graph structure. 
%{\rc It is also worth thinking about the embedding generation for incoming vertexes when the network is not stable and varies from time to time. ?}
}

% Future work: \emph{We can discuss how to incorporate other information e.g., node content in embedding. And this is a transductive learning method. How to modify it to be an inductive embedding method, such that it can generate embedding vectors for nodes that are not in training set.}

% \newpage
{
% \small
\bibliographystyle{abbrvnat} 
\bibliography{sample} 
}

\newpage
 \begin{center}
{\Huge \bf
Supplemental Material
}
\end{center}
\vspace{10mm}

We provide the proof to Theorem~\ref{theorem6} in the supplemental material. {\rc The proof in our paper borrows some idea from the proof sketch for a special case of our theorem in \citep{zaheer2017deep}. While \citet{zaheer2017deep} only provides a small paragraph to explain the raw idea on how to prove a special case, we provide complete and rigorous proof and nontrivial extension to the more general case.}

\begin{definition}[Power Sum Symmetric Polynomials] For every integer $k \geq 0$, the $k$-th power sum symmetric polynomial in variables $x_1,x_2,...,x_n$ is defined as
\[
p_k(x_1,x_2,...,x_n) = \sum_{i = 1}^n x_i^k.
\]

\end{definition}

\begin{definition}[Monomial Symmetric Polynomials] For $\boldsymbol{\lambda} = [\lambda_1,\lambda_2,...,\lambda_n]^\top \in \mathbb{R}^n$ where $\lambda_1 \geq \lambda_2 \geq ...\geq \lambda_n \geq 0$, let $\Omega_{\boldsymbol{\lambda}}$ be the set of all permutations of the entries in $\boldsymbol{\lambda}$. For $\boldsymbol{\lambda}$, the monomial symmetric polynomials is defined as
\[
m^{\boldsymbol{\lambda}} = \sum_{\boldsymbol{[\alpha_1,...,\alpha_n]^\top \in \Omega_{\boldsymbol{\lambda}}}} x^{\alpha_1}_1 x^{\alpha_2}_{2}...x^{\alpha_n}_{n}
\]
\end{definition}

\begin{lemma}\label{lemma:approx}
If $f: X \rightarrow\mathbb{R} $ %\footnote{\rc Ji's comment: this is some annotation issue.}
is a continuous real-valued function defined on the compact set $X \subset \mathbb{R}^n$, then $\forall \epsilon > 0$, there $\exists$ a polynomial function $p: X \rightarrow \mathbb{R}$, such that $\sup_{\x \in X} | f (\x) - p(\x) | < \epsilon$.
\end{lemma}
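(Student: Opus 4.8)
\textbf{Proof plan for Lemma~\ref{lemma:approx}.}
The statement is exactly the Stone--Weierstrass approximation theorem applied to the algebra of polynomials on a compact subset of $\mathbb{R}^n$, so the plan is to verify the hypotheses of Stone--Weierstrass for the subalgebra $\mathcal{A} \subset C(X)$ consisting of (restrictions to $X$ of) real polynomials in the $n$ coordinate functions $x_1,\dots,x_n$. First I would observe that $\mathcal{A}$ is indeed a subalgebra: it contains the constant functions, and it is closed under addition, scalar multiplication, and pointwise multiplication, since sums and products of polynomials are polynomials. Second, I would check that $\mathcal{A}$ separates points of $X$: if $\x \neq \y$ in $X \subset \mathbb{R}^n$, then they differ in some coordinate $i$, and the polynomial $p(\mathbf{z}) = z_i$ satisfies $p(\x) \neq p(\y)$. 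Third, $\mathcal{A}$ vanishes nowhere, since the constant polynomial $1$ is in $\mathcal{A}$. The Stone--Weierstrass theorem then gives that $\mathcal{A}$ is dense in $C(X)$ with respect to the supremum norm, which is precisely the assertion: for every continuous $f$ and every $\epsilon > 0$ there is a polynomial $p$ with $\sup_{\x \in X}|f(\x) - p(\x)| < \epsilon$.

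Alternatively, if one prefers a self-contained argument avoiding a black-box citation, I would reduce to the classical one-dimensional Weierstrass theorem. Since $X$ is compact it is contained in some box $[-M,M]^n$, and by the Tietze extension theorem $f$ extends to a continuous function on $[-M,M]^n$ (or one may work directly on the box after extending). One then approximates uniformly on $[-M,M]^n$ by a tensor-product construction: iterating the one-dimensional Bernstein-polynomial approximation coordinate by coordinate, or invoking the multivariate Weierstrass theorem directly, yields a polynomial $p$ uniform-close to the extension, hence to $f$ on $X$. Either route establishes the claim; the Stone--Weierstrass route is the cleanest.

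The main (and essentially only) subtlety is that $X$ is an arbitrary compact set, not a box or a nice region, so one cannot literally apply Bernstein polynomials on $X$ itself --- one must first pass to a box containing $X$ and use an extension theorem, or else appeal directly to Stone--Weierstrass, which handles arbitrary compact $X$ with no extra work. Since the lemma is only an auxiliary tool used to set up the polynomial representation in the proof of Theorem~\ref{theorem6}, a short proof via Stone--Weierstrass is appropriate, and I would present it in the three-bullet checklist form above.
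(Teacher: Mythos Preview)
Your proposal is correct and takes essentially the same approach as the paper: the paper's proof is a one-line invocation of the Stone--Weierstrass theorem for compact Hausdorff spaces, and you do the same, merely spelling out the verification of the hypotheses (subalgebra, point-separating, containing constants) that the paper leaves implicit.
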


\begin{proof}
This lemma is a direct application of Stone-Weierstrass Theorem~\cite{stone1937applications,stone1948generalized} for the compact Hausdorff space. 
\end{proof}

\begin{corollary}\label{corollary:approx}
For any continuous real-valued function $f: X \rightarrow \mathbb{R}$ of the form defined in Theorem~\ref{theorem6}, 
\[
f(\underbrace{x_{1,1}, x_{1,2}, \cdots, x_{1,N_1}}_{G_1}, \underbrace{x_{2,1}, x_{2,2}, \cdots, x_{2,N_2}}_{G_2}, \cdots, \underbrace{x_{K,1}, x_{K,2}, \cdots, x_{K,N_K}}_{G_K}),
\]
there exist polynomial functions $p: X \rightarrow \mathbb{R}$ also permutation invariant within each $G_k$ to closely approximate $f$.
\end{corollary}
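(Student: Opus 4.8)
The plan is to prove Corollary~\ref{corollary:approx} by combining Lemma~\ref{lemma:approx} with a symmetrization argument. First I would invoke Lemma~\ref{lemma:approx} to obtain, for any $\epsilon > 0$, a polynomial $q: X \to \mathbb{R}$ with $\sup_{\x \in X}|f(\x) - q(\x)| < \epsilon$. This $q$ need not respect the partial permutation symmetry, so the second step is to symmetrize it. Define
\[
p(\x) := \frac{1}{\prod_{k=1}^K N_k!} \sum_{\pi_1 \in \mathcal{S}_{N_1}} \cdots \sum_{\pi_K \in \mathcal{S}_{N_K}} q(\pi_1 \cdot x_{1,\bullet}, \ldots, \pi_K \cdot x_{K,\bullet}),
\]
where $\pi_k$ ranges over the symmetric group $\mathcal{S}_{N_k}$ permuting the variables within group $G_k$, and $\pi_k \cdot x_{k,\bullet}$ denotes the corresponding permutation of $(x_{k,1},\ldots,x_{k,N_k})$. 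Since $q$ is a polynomial and an average of finitely many polynomials is a polynomial, $p$ is a polynomial; and by construction $p$ is invariant under any permutation within each $G_k$, because such a permutation just reindexes the sum.

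The third step is the error estimate. Because $f$ is itself partially permutation invariant, for every choice of $(\pi_1,\ldots,\pi_K)$ we have $f(\pi_1 \cdot x_{1,\bullet},\ldots,\pi_K \cdot x_{K,\bullet}) = f(\x)$. Hence
\[
|f(\x) - p(\x)| = \left| \frac{1}{\prod_k N_k!} \sum_{\pi_1,\ldots,\pi_K} \bigl( f(\pi_1 \cdot x_{1,\bullet},\ldots) - q(\pi_1 \cdot x_{1,\bullet},\ldots) \bigr) \right| \le \frac{1}{\prod_k N_k!} \sum_{\pi_1,\ldots,\pi_K} \epsilon = \epsilon,
\]
using the triangle inequality and the fact that $X$ is invariant under these coordinate permutations (so each permuted point still lies in $X$ and the bound from Lemma~\ref{lemma:approx} applies). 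Taking the supremum over $\x \in X$ gives $\sup_{\x \in X}|f(\x) - p(\x)| \le \epsilon$, which proves the claim.

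I expect the only subtle point — and the one worth stating carefully — is the requirement that the compact set $X$ be closed under the partial coordinate permutations, so that $q$'s uniform approximation bound genuinely transfers to every permuted argument. In the intended application $X$ is the product of per-group domains on which the embedding vectors live, which is manifestly permutation-symmetric within each group, so this holds; but it should be noted explicitly as a hypothesis (or simply folded into the statement that $f$ is "of the form defined in Theorem~\ref{theorem6}," since partial permutation invariance of $f$ presupposes its domain is permutation-symmetric). Everything else is routine: the averaging trick is standard, and no new analytic input beyond Lemma~\ref{lemma:approx} is needed. This corollary is then the bridge to Theorem~\ref{theorem6}, since it reduces the problem to representing a \emph{permutation-invariant polynomial} in the sum-decomposition form \eqref{eq:URT}, which is where the power-sum and monomial symmetric polynomial machinery introduced in the two definitions above will be used.
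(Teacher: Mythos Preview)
Your proposal is correct and follows essentially the same approach as the paper: invoke Lemma~\ref{lemma:approx} to get a polynomial $q$ within $\epsilon$ of $f$, then symmetrize $q$ by averaging over all $\prod_k N_k!$ within-group permutations and use the partial permutation invariance of $f$ together with the triangle inequality to show the symmetrized polynomial $p$ is still within $\epsilon$. Your explicit remark that $X$ must itself be invariant under the group-wise coordinate permutations is a hypothesis the paper uses tacitly but does not state.
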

\begin{proof}
Let $S^k$ be the set of all possible permutations of $1, ..., N_k$ for the group $G^k$, where $k = 1,...,K$. Suppose $\boldsymbol{\sigma}^k = [\sigma_1^k,\sigma_2^k,...,\sigma^k_{N_k}] \in S^k$ is some permutation for $G^k$. Then we know that $\forall \boldsymbol{\sigma}^k \in S^k$ with $k = 1,...,K$, we have
\[
f(\x_1, \x_2, \cdots, \x_K ) \\ = f([\x_1]_{\boldsymbol{\sigma}^1},[\x_2]_{\boldsymbol{\sigma}^2}, \cdots, [\x_K]_{\boldsymbol{\sigma}^K})
\]
where we let $\x_k$ to denote $[x_{k,1}, x_{k,2}, \cdots, x_{k,N_k}]$ for $G_k$ and $[\x_k]_{\boldsymbol{\sigma}^k}$ to denote the permuted $\x_k$ for simplicity. There are in total $\prod_{k=1}^K (N_k!)$ permutations. 

By lemma~\ref{lemma:approx}, we can know that $\forall \epsilon > 0$, there $\exists$ a polynomial function $q:X \rightarrow \mathbb{R}$ such that $\sup|f(\x_1, \x_2, \cdots, \x_K )-q(\x_1, \x_2, \cdots, \x_K )| \leq \epsilon$. This further implies that $\forall \boldsymbol{\sigma}^k \in S^k$ with $k = 1,...,K$, we have 
\[
\sup|f([\x_1]_{\boldsymbol{\sigma}^1},[\x_2]_{\boldsymbol{\sigma}^2}, \cdots, [\x_K]_{\boldsymbol{\sigma}^K})-q([\x_1]_{\boldsymbol{\sigma}^1},[\x_2]_{\boldsymbol{\sigma}^2}, \cdots, [\x_K]_{\boldsymbol{\sigma}^K})| \leq \epsilon
\]

We let $p(\x_1, \x_2, \cdots, \x_K ) = \frac{1}{\prod_{k=1}^K (N_k!)} \sum_{\boldsymbol{\sigma}^1 \in S^1, \cdots, \boldsymbol{\sigma}^K \in S^K} q([\x_1]_{\boldsymbol{\sigma}^1},[\x_2]_{\boldsymbol{\sigma}^2}, \cdots, [\x_K]_{\boldsymbol{\sigma}^K})$ and by the property of permutation invariant within each $G^k$ of the function $f$, we have that
\small
\begin{align*}
& \sup|f(\x_1, \x_2, \cdots, \x_K )-p(\x_1, \x_2, \cdots, \x_K )| \\
= & \sup \left|f(\x_1, \x_2, \cdots, \x_K )-\frac{1}{\prod_{k=1}^K (N_k!)} \sum_{\boldsymbol{\sigma}^1 \in S^1, \cdots, \boldsymbol{\sigma}^K \in S^K} q([\x_1]_{\boldsymbol{\sigma}^1},[\x_2]_{\boldsymbol{\sigma}^2}, \cdots, [\x_K]_{\boldsymbol{\sigma}^K})\right| \\
= & \sup \Big|\frac{1}{\prod_{k=1}^K (N_k!)} \sum_{\boldsymbol{\sigma}^1 \in S^1, \cdots, \boldsymbol{\sigma}^K \in S^K} f([\x_1]_{\boldsymbol{\sigma}^1}, \cdots, [\x_K]_{\boldsymbol{\sigma}^K})-\frac{1}{\prod_{k=1}^K (N_k!)} \sum_{\boldsymbol{\sigma}^1 \in S^1, \cdots, \boldsymbol{\sigma}^K \in S^K} q([\x_1]_{\boldsymbol{\sigma}^1}, \cdots, [\x_K]_{\boldsymbol{\sigma}^K}) \Big|\\
\leq  & \frac{1}{\prod_{k=1}^K (N_k!)} \sum_{\boldsymbol{\sigma}^1 \in S^1, \cdots, \boldsymbol{\sigma}^K \in S^K}  \sup \Big | f([\x_1]_{\boldsymbol{\sigma}^1},[\x_2]_{\boldsymbol{\sigma}^2}, \cdots, [\x_K]_{\boldsymbol{\sigma}^K})-q([\x_1]_{\boldsymbol{\sigma}^1},[\x_2]_{\boldsymbol{\sigma}^2}, \cdots, [\x_K]_{\boldsymbol{\sigma}^K}) \Big |\\
\leq  & \frac{1}{\prod_{k=1}^K (N_k!)} \sum_{\boldsymbol{\sigma}^1 \in S^1, \cdots, \boldsymbol{\sigma}^K \in S^K}  \epsilon \\
=& \epsilon
\end{align*}
\normalsize
Therefore, we can see that the polynomial function $p(\x_1, \x_2, \cdots, \x_K )$ is permutation invariant within each group and can closely approximate the function $f(\x_1, \x_2, \cdots, \x_K)$.

\end{proof}

\begin{lemma}\label{lemma:repre} Suppose $p: X \rightarrow \mathbb{R}$ is a real-valued polynomial in the form 
\[
p(\underbrace{x_{1,1}, x_{1,2}, \cdots, x_{1,N_1}}_{G_1}, \cdots, \underbrace{x_{K,1}, x_{K,2}, \cdots, x_{K,N_K}}_{G_K}),
\]
which is permutation invariant within each group $G^k$, $k = 1,...,K$. Then, $p$ can be represented as
\begin{align*}
p(\underbrace{x_{1,1}, x_{1,2}, \cdots, x_{1,N_1}}_{G_1}, \cdots, \underbrace{x_{K,1}, x_{K,2}, \cdots, x_{K,N_K}}_{G_K}) = \sum_{i=1} c_i m_1^{\boldsymbol{\lambda}^i_{1}} m_2^{\boldsymbol{\lambda}^i_{2}} ... m_{K}^{\boldsymbol{\lambda}^i_{K}}
\end{align*}
where $c_i$ is the rational coefficient, $m_k^{\boldsymbol{\lambda}^i_{K}}$ denotes a monomial symmetric polynomial of variables $x_{k,1}, x_{k,2}, \cdots, x_{k,N_k}$ within group $G_k$ and $\boldsymbol{\lambda}^i_{K} \in \mathbb{R}^n$ are some certain exponents for the monomial symmetric polynomial.
\end{lemma}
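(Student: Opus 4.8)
The plan is to work directly with the monomial expansion of $p$ and read off the claimed decomposition from its invariance. Write $p = \sum_{\mathbf{a}} c_{\mathbf{a}}\, \mathbf{x}^{\mathbf{a}}$ (a finite sum), where the multi-index $\mathbf{a} = (a_{k,j})_{1\le k\le K,\ 1\le j\le N_k}$ ranges over nonnegative integer arrays and $\mathbf{x}^{\mathbf{a}} = \prod_{k=1}^{K}\prod_{j=1}^{N_k} x_{k,j}^{a_{k,j}}$. Let $G = S_{N_1}\times\cdots\times S_{N_K}$ act on such arrays by permuting, inside each block $k$, the entries $(a_{k,1},\dots,a_{k,N_k})$ via the $k$-th factor. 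Permuting the variables of $G_k$ is exactly applying the corresponding element of $S_{N_k}$ to $\mathbf{a}$, so the hypothesis ``$p$ is invariant under every within-group permutation'' becomes ``$p$ is $G$-invariant''. Expanding $\sigma\cdot p = p$ and comparing coefficients of each monomial gives $c_{\sigma\cdot\mathbf{a}} = c_{\mathbf{a}}$ for all $\mathbf{a}$ and all $\sigma\in G$; that is, the coefficient function is constant on $G$-orbits.

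Next I would identify the orbit sums. Fix a $G$-orbit $O$ with representative $\mathbf{a}$, and for each block $k$ let $\boldsymbol{\lambda}_k(O)\in\mathbb{R}^{N_k}$ be $(a_{k,1},\dots,a_{k,N_k})$ sorted into nonincreasing order, a legitimate exponent vector for a monomial symmetric polynomial in the $G_k$-variables. Since $G$ is a product group acting block by block, $O$ is precisely the set of arrays whose $k$-th block is a permutation of $\boldsymbol{\lambda}_k(O)$ for every $k$; and because $m_k^{\boldsymbol{\lambda}_k(O)}$ is by definition the sum, each taken once, over the distinct permutations of $\boldsymbol{\lambda}_k(O)$, the product $\prod_{k=1}^{K} m_k^{\boldsymbol{\lambda}_k(O)}$ enumerates each monomial of $O$ exactly once. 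Hence $\sum_{\mathbf{a}\in O}\mathbf{x}^{\mathbf{a}} = \prod_{k=1}^{K} m_k^{\boldsymbol{\lambda}_k(O)}$, and grouping the monomials of $p$ by orbit yields
\[
p \;=\; \sum_{O} c_O \sum_{\mathbf{a}\in O}\mathbf{x}^{\mathbf{a}} \;=\; \sum_{O} c_O\, m_1^{\boldsymbol{\lambda}_1(O)} m_2^{\boldsymbol{\lambda}_2(O)} \cdots m_K^{\boldsymbol{\lambda}_K(O)},
\]
where $c_O$ is the common coefficient on $O$. Enumerating the finitely many orbits as $O_1,O_2,\dots$ and setting $c_i := c_{O_i}$, $\boldsymbol{\lambda}^i_k := \boldsymbol{\lambda}_k(O_i)$ gives exactly the stated form.

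For the rationality of the $c_i$: the $c_i$ are literally monomial coefficients of $p$, so it suffices to ensure $p$ has rational coefficients. In the intended use $p$ is the symmetrized approximant built in Corollary~\ref{corollary:approx}; there we are free to first choose the Stone–Weierstrass approximant $q$ with rational coefficients (rationals are dense in $\mathbb{R}$, so rounding the coefficients of any polynomial approximant keeps the sup-distance on the compact set $X$ below $\epsilon$), and the subsequent symmetrization is an average over $\prod_k N_k!$ permutations — multiplication by the rational scalar $1/\prod_k N_k!$ — which preserves rationality. Thus we may assume without loss of generality that $p$, and hence every $c_i$, has rational coefficients.

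I do not anticipate a genuine obstacle; the content is the standard fact that monomial symmetric polynomials span the symmetric polynomials, lifted to a product of symmetric groups. The one place needing care is the bookkeeping in the second paragraph: verifying that a $G$-orbit of a monomial is exactly the ``rectangular'' product of the within-block orbits and that $\prod_k m_k^{\boldsymbol{\lambda}_k(O)}$ double-counts nothing, and being explicit that the surviving coefficients $c_{\mathbf{a}}$ are precisely the $c_i$ of the statement so that the rationality remark applies to them.
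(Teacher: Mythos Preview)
Your proposal is correct and follows essentially the same approach as the paper: expand $p$ into monomials, use the within-group permutation invariance to see that coefficients are constant on orbits, and identify each orbit sum as a product $\prod_k m_k^{\boldsymbol{\lambda}_k}$. Your orbit-of-the-product-group formulation is a cleaner packaging of the paper's recursive ``sum over permutations in $G_1$, then $G_2$, \dots'' argument, and you additionally justify the rationality of the $c_i$ (by choosing a rational-coefficient approximant in Corollary~\ref{corollary:approx}), a point the paper's proof states but does not address.
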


\begin{proof}
Suppose that the polynomial is expressed as a summation of monomials 
\[
p(\underbrace{x_{1,1}, x_{1,2}, \cdots, x_{1,N_1}}_{G_1}, \cdots, \underbrace{x_{K,1}, x_{K,2}, \cdots, x_{K,N_K}}_{G_K}) = \sum_{r} c_r \prod_{j=1}^{N_1} x^{\alpha^r_{1,j}}_{1,j} \prod_{j=1}^{N_2} x^{\alpha^r_{2,j}}_{2,j},\cdots,\prod_{j = 1}^{N_K} x^{\alpha^r_{K,j}}_{K,j}
\]
where $\alpha^r_{k,j}$ is the exponent of $x_{k,j}$ for the $r$-th monomial.

We consider the term $c_r \prod_{j=1}^{N_1} x^{\alpha^r_{1,j}}_{1,j} \prod_{j=1}^{N_2} x^{\alpha^r_{2,j}}_{2,j}\cdots\prod_{j = 1}^{N_K} x^{\alpha^r_{K,j}}_{K,j}$ for a certain $r$ in the above summation. Since $p(\cdot)$ is partially symmetric within the group $G_1$, then there must exist terms having exponents of permuted $\alpha^r_{1,j}, \forall j = 1,...,N_1$ on $x_{1,j}$ while the other factors $c_r \prod_{j=1}^{N_2} x^{\alpha^r_{2,j}}_{2,j}\cdots\prod_{j = 1}^{N_K} x^{\alpha^r_{K,j}}_{K,j}$ remain the same. (Otherwise, the polynomial is not permutation-invariant w.r.t. group $G_1$.) By summing up those terms together, we get a term with a factor of monomial symmetric polynomial as $c_r m_1^{\boldsymbol{\alpha}_1^r} \prod_{j=1}^{N_2} x^{\alpha^r_{2,j}}_{2,j}\cdots\prod_{j = 1}^{N_K} x^{\alpha^r_{K,j}}_{K,j}$. Based on this term, we consider that the polynomial $p(\cdot)$ is also permuted invariant within $G_2$, which implies that there must exist terms having exponents of permuted $\alpha^r_{2,j}, \forall j=1,...,N_2$ on $x_{2,j}$ while the other factors $c_r m_1^{\boldsymbol{\alpha}_1^r} \prod_{j=1}^{N_3} x^{\alpha^r_{3,j}}_{3,j}\cdots\prod_{j = 1}^{N_K} x^{\alpha^r_{K,j}}_{K,j}$ remain the same. Therefore, adding up all those terms together, we have $c_r m_1^{\boldsymbol{\alpha}_1^r} m_2^{\boldsymbol{\alpha}_2^r} \prod_{j=1}^{N_3} x^{\alpha^r_{3,j}}_{3,j}\cdots\prod_{j = 1}^{N_K} x^{\alpha^r_{K,j}}_{K,j}$. Carrying out the above procedures recursively, we can eventually have $c_r m_1^{\boldsymbol{\alpha}_1^r} m_2^{\boldsymbol{\alpha}_2^r}...m_K^{\boldsymbol{\alpha}_K^r}$. For all the remaining terms in the polynomial $p(\cdot)$, performing the same steps will lead to completion of our proof. 
\end{proof}

\begin{example}
$p(x^1_1,x^1_2,x^2_1,x^2_2)$ is a polynomial that is permutation invariant within each group, $G_1 = \{x^1_1,x^1_2 \}$ and $G_2 = \{x^2_1,x^2_2 \}$. We let
\begin{align*}
p(x_{1,1},x_{1,2},x_{2,1},x_{2,2}) =& x_{1,1}x_{1,2}^2x_{2,1}x_{2,2}^2 + x_{1,1}^2x_{1,2}x_{2,1}^2x_{2,2} + x_{1,1}x_{1,2}^2x_{2,1}^2x_{2,2} + x_{1,1}^2x_{1,2}x_{2,1}x_{2,2}^2 \\
&+ x_{1,1}^2x_{1,2}^3x_{2,1}^3x_{2,2}^4 + x_{1,1}^3x_{1,2}^2x_{2,1}^4x_{2,2}^3 + x_{1,1}^2x_{1,2}^3x_{2,1}^4x_{2,2}^3 + x_{1,1}^3x_{1,2}^2x_{2,1}^3x_{2,2}^4.
\end{align*}
It is easy to observe that $p(x_{1,1},x_{1,2},x_{2,1},x_{2,2})$ can be rewritten as
\begin{align*}
p(x_{1,1},x_{1,2},x_{2,1},x_{2,2}) =& (x_{1,1}x_{1,2}^2 + x_{1,1}^2x_{1,2})(x_{2,1}x_{2,2}^2 + x_{2,1}^2x_{2,2})\\
& + (x_{1,1}^2x_{1,2}^3 + x_{1,1}^3x_{1,2}^2)(x_{2,1}^3x_{2,2}^4 + x_{2,1}^4x_{2,2}^3) \\
&= m_1^{(2,1)} m_2^{(2,1)} + m_1^{(3,2)} m_2^{(4,3)}.
\end{align*}
\end{example}

\begin{lemma} \label{lemma:mono} For a symmetric polynomial $p(x_1,..., x_n)$ of n variables,  it can be expressed by a polynomial in the power sum symmetric polynomials $p_k(x_1,..., x_n)$ for $1 \leq  k \leq n$ with rational coefficients.
\end{lemma}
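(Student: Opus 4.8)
The plan is to deduce this from two classical pillars of the theory of symmetric polynomials: the fundamental theorem of symmetric polynomials and Newton's identities. First I would record that, by the fundamental theorem, every symmetric polynomial $p(x_1,\dots,x_n)$ equals $Q(e_1,\dots,e_n)$ for some polynomial $Q$ with rational (indeed integer) coefficients, where $e_k=e_k(x_1,\dots,x_n)$ denotes the $k$-th elementary symmetric polynomial. If a self-contained argument is preferred to a citation, this can be extracted from Lemma~\ref{lemma:repre} restricted to a single group together with the usual lexicographic leading-term induction: the leading monomial of a monomial symmetric polynomial $m^{\boldsymbol\lambda}$ is matched by a suitable product $e_1^{a_1}\cdots e_n^{a_n}$ with $a_i=\lambda_i-\lambda_{i+1}$, and subtracting this product strictly lowers the leading monomial, so the (Noetherian) induction terminates.

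Next I would invoke Newton's identities: for every $k$ with $1\le k\le n$,
\[
p_k - e_1 p_{k-1} + e_2 p_{k-2} - \cdots + (-1)^{k-1} e_{k-1} p_1 + (-1)^{k} k\, e_k = 0,
\]
where $p_j=\sum_{i=1}^n x_i^{\,j}$. Solving this recursion for $e_k$ yields
\[
e_k=\frac{(-1)^{k-1}}{k}\Bigl(p_k - e_1 p_{k-1} + \cdots + (-1)^{k-1} e_{k-1} p_1\Bigr),
\]
so by induction on $k$ each $e_k$ is a polynomial $R_k(p_1,\dots,p_k)$ in the power sums with rational coefficients; concretely $e_1=p_1$, $e_2=\tfrac12(p_1^2-p_2)$, $e_3=\tfrac16(p_1^3-3p_1p_2+2p_3)$, and in general the denominators divide $k!$. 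Finally I would substitute back: $p=Q(e_1,\dots,e_n)=Q\bigl(R_1(p_1),R_2(p_1,p_2),\dots,R_n(p_1,\dots,p_n)\bigr)$, which is again a polynomial in $p_1,\dots,p_n$ with rational coefficients because a composition of rational-coefficient polynomials is one. Since only the power sums $p_k$ with $1\le k\le n$ are used, this is exactly the claimed representation.

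The step carrying the real content — as opposed to bookkeeping — is the inversion through Newton's identities, and specifically the division by $k$ that is forced when solving for $e_k$. That division is precisely why the conclusion must be stated over $\mathbb{Q}$ rather than $\mathbb{Z}$: the power sums $p_1,\dots,p_n$ generate the ring of symmetric polynomials only after the integers $2,\dots,n$ are made invertible. Establishing Newton's identities themselves is routine — for instance by comparing coefficients of $t^k$ in the logarithmic derivative of $\prod_{i=1}^n(1-x_i t)=\sum_{k=0}^n(-1)^k e_k t^k$ — so I would either cite it or dispatch it in a one-line generating-function computation, and spend the bulk of the write-up on the two inductions described above.
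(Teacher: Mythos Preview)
Your proposal is correct and follows precisely one of the two routes the paper itself names: the paper's proof is essentially a citation, stating that the result follows either from the fact that $\mathbb{Q}[p_1,\dots,p_n]$ generates the ring of symmetric polynomials over $\mathbb{Q}$, or from ``the combination of the fundamental theorem of symmetric polynomials and Newton's identities.'' You have fleshed out the second of these in detail, so your argument is a strict expansion of what the paper sketches rather than a different approach.
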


\begin{proof}
This lemma is a direct result of the fact that $p_1, p_2, ...,p_n$ are algebraically independent and the ring of symmetric polynomials with rational coefficients can be generated as a $\mathbb{Q}$-algebra, i.e. $\mathbb{Q}[p_1, p_2, ...,p_n]$.~\cite{macdonald1998symmetric,stanley2001enumerative} This lemma can also be easily proved by the combination of the fundamental theorem of symmetric polynomials and newton's identities.
\end{proof}

\noindent\textbf{Proof of Theorem~\ref{theorem6}}
\begin{proof}
By Corollary~\ref{corollary:approx}, for any function $f:X\rightarrow \mathbb{R}$ that are permutation-invariant w.r.t. the variables within each group $G_k, \forall k = 1,...,K$, which is in the form
\[
f(\underbrace{x_{1,1}, x_{1,2}, \cdots, x_{1,N_1}}_{G_1}, \underbrace{x_{2,1}, x_{2,2}, \cdots, x_{2,N_2}}_{G_2}, \cdots, \underbrace{x_{K,1}, x_{K,2}, \cdots, x_{K,N_K}}_{G_K}),
\]
we can always find a polynomial function $p:X\rightarrow \mathbb{R}$ that are also permutation-invariant w.r.t. the variables within each group $G_k$ to approximate $f$ closely in a given small error tolerance $\epsilon$.

Here we first define the function $g_k: \mathbb{R}\rightarrow \mathbb{R}^{N_k}$  in the following form,
\[
g_k(x_{k,n}) = \begin{bmatrix}
x_{k,n}\\ 
x^2_{k,n}\\
x^3_{k,n}\\ 
\vdots \\ 
x^{N_k}_{k,n}
\end{bmatrix}
\]
which thus leads to
\[
\sum_{n=1}^{N_k} g_k(x_{k,n}) = \begin{bmatrix}
\sum_{n=1}^{N_k} x_{k,n}\\ 
\sum_{n=1}^{N_k} x^2_{k,n}\\
\sum_{n=1}^{N_k} x^3_{k,n}\\ 
\vdots \\ 
\sum_{n=1}^{N_k} x^{N_k}_{k,n}
\end{bmatrix} = \begin{bmatrix}
p_1(x_{k,1},\cdots, x_{k,N_k})\\ 
p_2(x_{k,1},\cdots, x_{k,N_k})\\
p_3(x_{k,1},\cdots, x_{k,N_k})\\ 
\vdots \\ 
p_{N_k}(x_{k,1},\cdots, x_{k,N_k})
\end{bmatrix}
\]
Therefore, we generate a sequence of power sums basis by $\sum_{n=1}^{N_k} g_k(x_{k,n})$. 

By Lemma~\ref{lemma:repre}, the polynomial function $p(x_{1,1}, x_{1,2}, \cdots, x_{1,N_1},\cdots, x_{K,1}, x_{K,2}, \cdots, x_{K,N_K})$ can be expressed as $\sum_{i=1} c_i m_1^{\boldsymbol{\lambda}^i_{1}} m_2^{\boldsymbol{\lambda}^i_{2}} ... m_{K}^{\boldsymbol{\lambda}^i_{K}}$. Note that each $m_k^{\boldsymbol{\lambda}^i_{k}}$ is a symmetric polynomial, which thus can be rewritten as a polynomial expression of power sum basis,
\[p_1(x_{k,1},\cdots, x_{k,N_k}), p_2(x_{k,1},\cdots, x_{k,N_k}),\cdots, p_{N_k}(x_{k,1},\cdots, x_{k,N_k}).
\]
which has been generated by $\sum_{n=1}^{N_k} g_k(x_{k,n})$.

Hence the polynomial $p(x_{1,1}, x_{1,2}, \cdots, x_{1,N_1},\cdots, x_{K,1}, x_{K,2}, \cdots, x_{K,N_K})$ is also a function of $\sum_{n=1}^{N_k} g_k(x_{k,n}), \forall k = 1,...,K$, which will be expressed as
\small
\[
p(x_{1,1}, x_{1,2}, \cdots, x_{1,N_1},\cdots, x_{K,1}, x_{K,2}, \cdots, x_{K,N_K}) = h(\sum_{n=1}^{N_1} g_1(x_{1,n}),\sum_{n=1}^{N_2} g_2(x_{2,n}),\cdots, \sum_{n=1}^{N_K} g_K(x_{K,n})).
\]
\normalsize

Thus, the function $f$ could be approximate in any given error tolerance $\epsilon$ by a polynomial $h(\sum_{n=1}^{N_1} g_1(x_{1,n}),\sum_{n=1}^{N_2} g_2(x_{2,n}),\cdots, \sum_{n=1}^{N_K} g_K(x_{K,n}))$, which finishes our proof.

\end{proof}

\end{document}